\title[Universal Online Learning with Unbounded Losses: Memory Is All You Need]{Universal Online Learning with Unbounded Losses:\\ Memory Is All You Need}
\newcommand{\comment}[1]{}
\newcommand{\suol}{\text{SUOL}}
\newcommand{\sual}{\text{SUAL}}
\newcommand{\suil}{\text{SUIL}}
\newcommand{\fs}{\text{FS}}
\newcommand{\fmv}{{\text{FMV}}}
\newcommand{\bq}{\boldsymbol q}
\newcommand{\rvM}{\text{M}}
\newcommand{\Acal}{\mathcal{A}}
\newcommand{\Bcal}{\mathcal{B}}
\newcommand{\Ccal}{\mathcal{C}}
\newcommand{\Ecal}{\mathcal{E}}
\newcommand{\Fcal}{\mathcal{F}}
\newcommand{\Lcal}{\mathcal{L}}
\newcommand{\Pcal}{\mathcal{P}}
\newcommand{\Xcal}{\mathcal{X}}
\newcommand{\Ycal}{\mathcal{Y}}
\newcommand{\Ebb}{\mathbb{E}}
\newcommand{\Nbb}{\mathbb{N}}
\newcommand{\Pbb}{\mathbb{P}}
\newcommand{\Qbb}{\mathbb{Q}}
\newcommand{\Rbb}{\mathbb{R}}
\newcommand{\Xbb}{\mathbb{X}}
\newcommand{\Ybb}{\mathbb{Y}}
\newcommand{\fb}{\bar{f}}
\newcommand{\one}{\mathbbm{1}}
\newcommand{\1}{\mathbbm{1}}
\definecolor{dark_red}{rgb}{0.2,0,0}
\newcommand{\N}{\mathbb{N}}
\newcommand{\mb}[1]{\ensuremath{\boldsymbol{#1}}}
\newcommand{\metric}{\rho}
\renewenvironment{proof}[1][]{\par\noindent{\bf Proof #1\ }}{\hfill\BlackBox\\[2mm]}
\begin{document}

\maketitle

\begin{abstract}%
We resolve an open problem of \cite{hanneke2021learning} on the subject of universally consistent online learning with non-i.i.d.\ processes and unbounded losses.  The notion of an \emph{optimistically universal learning rule} was defined by Hanneke \cite{hanneke2021learning} in an effort to study learning theory under minimal assumptions. A given learning rule is said to be optimistically universal if it achieves a low long-run average loss whenever the data generating process makes this goal achievable by some learning rule.  \cite{hanneke2021learning} posed as an open problem whether, for every unbounded loss, the family of processes admitting universal learning are precisely those having a finite number of distinct values almost surely.  In this paper, we completely resolve this problem, showing that this is indeed the case. As a consequence, this also offers a dramatically simpler formulation of an optimistically universal learning rule for any unbounded loss: namely, the simple \emph{memorization} rule already suffices.  Our proof relies on constructing random measurable partitions of the instance space and could be of independent interest for solving other open questions \cite{hanneke2021open}. We extend the results to the non-realizable setting thereby providing an optimistically universal Bayes consistent learning rule.
\end{abstract}

\begin{keywords}%
  online learning, universal consistency, stochastic processes, measurable partitions, statistical learning theory, Borel measure
\end{keywords}

\section{Introduction}
\paragraph{Online learning.} One of the main classical questions in statistical learning is \emph{learnability}: whether it is possible to have low prediction loss on a prediction task given observations. In this paper, we study this question in the context of \emph{online learning}.  In this setting, there is a possibly random sequence of points $\Xbb = (X_t)_{t \in \N}$ from a space $\Xcal$ of inputs and target values $\Ybb = (Y_t)_{t \in \N}$ from a space of outputs $\Ycal$. These two sequences are stochastic processes in general. Learning occurs sequentially, where at each time $t$ the learner has observed $(X_s)_{s \leq t-1}$ and $(Y_s)_{s \leq t-1}$ and $X_t$, and makes a prediction $\hat{Y}_t \in \Ycal$ for the value of $Y_t$.  Thus, we may consider $\hat{Y}_t = f_t((X_1,Y_1),\ldots,(X_{t-1},Y_{t-1}),X_t)$ for some function $f_t$ (possibly randomized).  We will assume that there is a measurable \emph{target} function $f^*: \Xcal \to \Ycal$ such that $Y_t = f^*(X_t)$: that is, the target values $Y_t$ are deterministic in the input.  However, we place no restrictions on this function.  Given a loss function $\ell$, the performance of the algorithm is quantified by the value $\ell(\hat Y_t, Y_t)$. In particular, we will say that an algorithm has learnt the prediction task if it guarantees a low long-run average loss, i.e. $\frac{1}{n}\sum_{t=1}^{n}\ell(\hat{Y}_t,Y_t)\xrightarrow[n\rightarrow \infty]{} 0~~ (a.s).$

\paragraph{Universal online learning.} In general, it is not possible to obtain such guarantees for all sequences $\Xbb$ and functions $f^*$. It is therefore necessary to constrain the set of considered pairs $(\Xbb,f^*)$.  There is a rich literature studying online learning with unrestricted sequences $\Xbb$ but with restrictions on the function $f^*$ 
\cite{littlestone1988learning,ben2009agnostic}, or with a mix of restrictions on $\Xbb$ and $f^*$ \cite{haussler1994predicting,ryabko2006pattern,urner2013probabilistic,bousquet2021theory}.  Here, we focus on \emph{universal learning}, which imposes no assumptions on the set of target functions $f^*$, but restricts the input sequences $\Xbb$.  Thus, we are interested in algorithms which are \emph{universally consistent} under a given family of stochastic processes $\Xbb$. For instance, a classic result in this line of work states that in the Euclidian space $\Xcal=\Rbb^n$, even the inductive \emph{nearest neighbor} predictor for classification has vanishing error rate for every i.i.d.\ input sequence $\Xbb$ and every target $f^*$ \cite{cover1967nearest,stone1977consistent,devroye2013probabilistic}, and thus can be shown to have vanishing long-run average loss in the online setting for this case as well.

In this work, we primarily focus on \emph{unbounded} loss functions: i.e., the case $\sup_{y,y' \in \Ycal} \ell(y,y') = \infty$.  
In the context of i.i.d.\ processes, or various extensions thereof, there have been many works that consider unbounded losses, but with additional restrictions on the $Y_t$ sequence (which effectively allow them to reduce back to the bounded case, in a certain sense).  For instance, in the i.i.d.\ setting, \cite{gyorfi:02} presents a variety of results on universal consistency for regression when $Y_t$ has finite \emph{variance}.  Similarly, \cite{gyorfi:07,gyorfi:12,biau:10} develop several results on universal consistency for regression under \emph{stationary ergodic} processes, under the restriction that $Y_t$ has finite fourth moments.
Extending these results to admit general families of processes $(\Xbb,\Ybb)$ is certainly an interesting and worthy direction of research.
However, in the present work we focus on the case of \emph{unrestricted} $Y_t$ sequences, aside from the aforementioned assumption that $Y_t = f^*(X_t)$.
In particular, in the regression setting, there are no variance restrictions on $Y_t$.  
As a consequence, the results we arrive at necessarily impose stricter requirements on the process $\Xbb$ of inputs.
Indeed, one of the main contributions of the present work is illuminating just how restrictive the requirement on $\Xbb$ must necessarily be for universal consistency to be possible with unbounded losses.

Most of the literature on universal consistency has relied on conventional assumptions on stochastic processes, imported from the probability theory literature, such as the i.i.d.\ assumption or relaxations to stationnary ergodic \cite{morvai1996nonparametric,gyorfi1999simple,gyofi2002strategies} or satisfying a law of large numbers \cite{morvai1999regression,steinwart2009learning}. In contrast, in an effort to formulate a theory of learning under minimal assumptions, \cite{hanneke2021learning} recently introduced a framework that uses provably-minimal assumptions on $\Xbb$. Specifically, the sole assumption on the input sequence is that there exists some algorithm which achieves universal consistency. This is known as the so-called optimist's decision theory: in order to achieve a given objective -- in our case universal consistency -- the minimal assumption is that this objective is at least possible.  In that sense, the optimist's assumption is minimal, as it is a necessary assumption to prove any positive results on universal consistency. We are then particularly interested in designing online learning rules with guarantees that hold without further assumptions, which are referred to as \emph{optimistically universal} learning rules. Such learning rules are consistent whenever learning is possible.  Equivalently, if an optimistically universal learner fails for a specific instance, any other strategy would fail to be universally consistent under that process $\Xbb$ as well.

In the case of \emph{unbounded} losses $\ell$, the existence of an optimistically universal online learning rule was settled by \cite{hanneke2021learning}.

This work also expresses a condition (condition $\fmv$ below) which characterizes the family of processes $\Xbb$ that admit the existence of universally consistent online learning rules for any (and all) unbounded losses.
However, the definition of the optimistically universal learning rule given in that work, the proof that it satisfies this property, 
and also the proofs establishing that the proposed condition indeed characterizes the relevant family of processes, are actually quite complex.
For instance, the learning rule involves identifying a function contained in a certain countable function class $\tilde{\Fcal}$, satisfying constraints on its losses relative to various other values, 
and the proof proceeds via arguing that there exists a choice of $\tilde{\Fcal}$ that is \emph{dense} in the set of all measurable functions, in a sense relevant to 
learning under every $\Xbb$ satisfying the condition.
However, \cite{hanneke2021learning} also poses an interesting open problem (Open Problem 4 there) regarding a potential dramatic simplification of this theory.  The essential question is the following:

\paragraph{Open Problem \cite{hanneke2021learning}:} \textit{For unbounded losses, is it true that there exist universally consistent online learning rules under $\Xbb$ if and only if $\Xbb$ almost surely has a finite number of distinct elements?}

\paragraph{Summary of contributions.} 
We completely resolve the above question, proving that the original condition proposed by \cite{hanneke2021learning} (condition $\fmv$ below) 
is equivalent to the condition that $\Xbb$ almost surely contains only a finite number of distinct elements.  
Therefore, for unbounded losses, there exist universally consistent online learning rules under $\Xbb$ if and only if $\Xbb$ almost surely contains only a finite number of distinct elements.
This result has immediate implications for drastically simplifying the theory of universally consistent learning with unbounded losses. 
Rather than the complicated learning strategy proposed by \cite{hanneke2021learning}, it suffices to use the simple \emph{memorization} algorithm, which simply remembers all past data points $(X_s,Y_s)$, $s < t$, and if the new $X_t$ satisfies $X_t = X_s$ for some $s < t$, it predicts $Y_s$.  If $\Xbb$ has only a finite number of distinct elements, then clearly this strategy has only finitely many non-zero losses, and hence would be universally consistent.  Indeed, since this is the case for all such $\Xbb$, for any unbounded loss this algorithm is also \emph{optimistically} universal.

{\vskip 2mm}\noindent We refer to the above simple condition as the \emph{finite support} ($\fs$) condition:

{\vskip 3mm}\noindent{\bf Condition \fs}~~ {\it Define ${\normalfont \fs}$  as the set of all stochastic processes $\Xbb$ such that}
\begin{equation*}
    |\{x\in \mathcal X: \Xbb \cap \{x\}\neq \emptyset\}| < \infty \quad (a.s.).
\end{equation*}

\noindent
As in \cite{hanneke2021learning}, we let $\suol$ denote the family of all processes $\Xbb$ under which 
there exist universally consistent online learning rules (defined more formally below).
Our main result can then be stated as follows.

\begin{theorem}
\label{thm:caracterization_suol}
For $\Xcal$ any separable metric space and $\ell$ any unbounded loss, $\normalfont\suol=\fs$.
\end{theorem}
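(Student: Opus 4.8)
The plan is to prove the two inclusions $\fs \subseteq \suol$ and $\suol \subseteq \fs$ separately, with the second being the substantial direction. The inclusion $\fs \subseteq \suol$ is immediate: under any $\Xbb \in \fs$, the memorization rule (which predicts $Y_s$ when $X_t = X_s$ for some $s<t$, and arbitrarily otherwise) incurs nonzero loss only on the first occurrence of each distinct value of $\Xbb$; since there are almost surely finitely many such values, the cumulative loss is almost surely finite, so the long-run average loss vanishes for every target $f^*$. Hence $\Xbb \in \suol$. This also immediately gives the promised simplification: memorization is optimistically universal.

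For the reverse inclusion $\suol \subseteq \fs$, I would rely on the characterization of $\suol$ from \cite{hanneke2021learning} via condition $\fmv$: it suffices to show $\fmv \subseteq \fs$, i.e.\ every process not in $\fs$ violates condition $\fmv$, and thus admits no universally consistent learner. So let $\Xbb \notin \fs$, meaning with positive probability $\Xbb$ takes infinitely many distinct values. The strategy is to exhibit an adversarial target function $f^*$ (or family thereof, combined with a probabilistic argument) forcing any learner to suffer large average loss: because the loss is unbounded, for every $M$ there are values $y,y'$ with $\ell(y,y') > M$, and since the learner has never seen a point equal to a fresh $X_t$, an adversary choosing $f^*$ can always make the learner's prediction wrong by an arbitrarily large margin on infinitely many rounds. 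The key quantitative step is to arrange that these large losses occur on a set of rounds of non-vanishing density: since infinitely many distinct values appear, one can pass to a subsequence of "first appearances" and, using that the loss is unbounded, choose the target values to blow up fast enough along this subsequence that the running average $\frac1n\sum_{t\le n}\ell(\hat Y_t, Y_t)$ does not converge to $0$, no matter what the (possibly randomized) learner does — this uses that the learner's prediction on a never-before-seen point is independent of the true label the adversary will assign there.

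The main obstacle, and the place where the construction of \emph{random measurable partitions} advertised in the abstract enters, is that the adversary cannot actually select $f^*$ after seeing the realization of $\Xbb$ and the learner's randomness: $f^*$ must be a fixed measurable function of $x \in \Xcal$, chosen in advance, while the "first appearance" times and the learner's predictions are random. To handle this, I would build a fixed measurable $f^*$ that is adversarial \emph{in expectation over $\Xbb$}: partition (a positive-probability portion of) the instance space $\Xcal$ into countably many measurable cells in such a way that, conditionally on the history, the next point $X_t$ lands in a "new" cell often enough, and assign on each cell a value from the unbounded range growing with the cell index. The separability of $\Xcal$ is what makes such measurable partitions available (each cell can be taken from a countable generating family of Borel sets). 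One then argues via a Borel–Cantelli / martingale-type estimate that with positive probability the learner's predictions disagree with $f^*$ by a growing amount on a positive fraction of rounds, contradicting universal consistency. Formalizing this partition construction and the accompanying density/independence argument — ensuring measurability of $f^*$, handling the learner's internal randomization, and extracting a genuine non-vanishing lower bound on the average loss rather than merely an infinitely-often violation — is the technical heart of the argument, and is exactly what replaces the intricate density-of-$\tilde\Fcal$ machinery of \cite{hanneke2021learning}.
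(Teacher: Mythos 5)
Your first inclusion ($\fs \subseteq \suol$ via memorization) is correct and matches the paper, and your overall skeleton --- cite the necessity of condition $\fmv$ from \cite{hanneke2021learning} and then show that every process outside $\fs$ violates $\fmv$ --- is exactly the paper's route ($\fs \subset \suol \subset \fmv$, then $\fmv = \fs$). However, the substantive step is missing. Showing $\fmv \subseteq \fs$ is a purely measure-theoretic statement about the process and partitions of $\Xcal$ (no learner or target function is involved; the adversarial-$f^*$ argument you describe is the already-known proof that $\suol \subseteq \fmv$, reproduced in Appendix~B, where the blow-up of $\ell(y_{k,0},y_{k,1})$ is calibrated to deterministic high-probability bounds $T_k$ on the hitting times). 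What you must actually produce is: for an arbitrary process taking infinitely many distinct values with positive probability, a countable \emph{measurable} partition $\{A_k\}$ of $\Xcal$ that the process visits infinitely often with positive probability. Your proposal only restates this goal (``partition \ldots in such a way that \ldots the next point lands in a new cell often enough'') and your one concrete suggestion --- that separability lets you take the cells from a countable generating family of Borel sets --- does not work: for any \emph{fixed} partition built from a countable base, a process whose distinct values accumulate at a single point (at a rate unknown in advance) may visit only finitely many cells. This is precisely why the question was open.

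The paper's resolution, which your sketch does not contain, is a partition that is both \emph{adapted to the process} and \emph{randomized, then derandomized}. One extracts from the law of $\Xbb$ deterministic parameters $N_k$ and $\delta_k$ such that, with high conditional probability, $\Xbb_{\leq N_k}$ contains at least $2^{2k+2}$ points that are pairwise $\delta_k$-separated; one then sprinkles random sets $B_k$ (unions of intervals of length $\delta_k$ around i.i.d.\ uniform points when $\Xcal=[0,1]$, or uniformly chosen cells from a finite cover by sets of diameter $\delta_k$ in a general separable metric space, the existence of which is itself a lemma using $\fmv$), with the number of sprinkled pieces calibrated so that $\P(B_k \cap \Xbb_{\leq N_k} = \emptyset)$ decays like $e^{-2^k}$ while $\P(x \in B_k) \leq 2^{-k}$ for each fixed $x$. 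Setting $R_k = \bigcup_{l \geq k} B_l$, $A_k = R_k \setminus R_{k+1}$, and $A_{-1} = \bigcap_k R_k$, Borel--Cantelli gives that $\Xbb$ hits all but finitely many $B_k$, a separate union bound shows $\Xbb$ a.s.\ avoids $A_{-1}$, and finally independence of the random partition from $\Xbb$ (law of total probability) yields a \emph{deterministic} partition violating $\fmv$. Your proposal explicitly defers ``formalizing this partition construction and the accompanying density/independence argument,'' but that deferred content is the entire contribution of the theorem; without it the proof is not complete.
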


\noindent
Although the existence of an optimistically universal learning algorithm was shown in \cite{hanneke2021learning} for unbounded losses, the above new characterization by $\fs$ drastically simplifies the definition of such a learning rule. As mentioned, as a consequence, we will prove that the simple ``memorization rule" (described above) is optimistically universal.

\begin{theorem}
\label{thm:memorization_opt_universal}
If $\Xcal$ is a separable metric space and the loss is unbounded, the memorization rule is an optimistically universal online learning rule. 
\end{theorem}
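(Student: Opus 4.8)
The plan is to obtain Theorem~\ref{thm:memorization_opt_universal} as an immediate corollary of Theorem~\ref{thm:caracterization_suol}. By definition, a learning rule is optimistically universal exactly when it is universally consistent under every process in $\suol$, and Theorem~\ref{thm:caracterization_suol} identifies $\suol$ with $\fs$. Hence it suffices to show that the memorization rule is universally consistent under an arbitrary $\Xbb \in \fs$ paired with an arbitrary measurable target $f^*$.

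First I would pin down the rule: choose any reference point $y_0 \in \Ycal$ and, at step $t$, predict $\hat{Y}_t = Y_s$ whenever $X_t = X_s$ for some $s < t$ (this is unambiguous since $X_s = X_t$ forces $Y_s = f^*(X_s) = f^*(X_t) = Y_t$), and predict $\hat{Y}_t = y_0$ otherwise; this is plainly a measurable online learning rule. Next I would restrict to the probability-one event on which $\Xbb$ realizes only $N < \infty$ distinct values. Calling $t$ \emph{fresh} when $X_t \notin \{X_1,\dots,X_{t-1}\}$ and letting $F$ be the set of fresh times, the map $t \mapsto X_t$ is a bijection from $F$ onto the set of distinct values of $\Xbb$ (each value first appears exactly once), so $|F| = N$. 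At every non-fresh time the rule predicts the correct label $Y_t$, incurring loss $\ell(Y_t, Y_t) = 0$; hence for every $n$ one has $\sum_{t=1}^{n} \ell(\hat{Y}_t, Y_t) \le \sum_{t \in F} \ell(y_0, f^*(X_t)) =: C$, which is almost surely finite as a sum of $N$ finite terms. Consequently $\frac{1}{n}\sum_{t=1}^{n} \ell(\hat{Y}_t, Y_t) \le C/n \to 0$, and since this happens on an event of probability one the memorization rule is universally consistent under $\Xbb$. As $\Xbb \in \suol = \fs$ was arbitrary, it is optimistically universal.

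The step I expect to require the most care --- although it remains very minor --- is the finiteness of the cumulative loss $C$: this needs only that $\ell$ is real-valued, so that each of the finitely many one-time losses incurred at a fresh time is finite. Crucially, no quantitative control on these losses is needed, only that there are almost surely finitely many of them, which is exactly what membership in $\fs$ guarantees. In summary, there is no real obstacle inside this theorem: all of its content is inherited from the equivalence $\suol = \fs$ of Theorem~\ref{thm:caracterization_suol}, which is where the genuine work takes place.
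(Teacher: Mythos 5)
Your proposal is correct and follows essentially the same route as the paper: Theorem~\ref{thm:memorization_opt_universal} is obtained by combining the characterization $\suol=\fs$ (Theorem~\ref{thm:caracterization_suol}) with the observation, identical in substance to the paper's Proposition~\ref{prop:fs_sufficient}, that under any $\Xbb\in\fs$ the memorization rule incurs nonzero loss only at the almost surely finitely many first occurrences of distinct values, so the cumulative loss is almost surely finite and the average loss vanishes.
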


\paragraph{Outline of paper.} The rest of this paper is structured as follows. We introduce the formal setup and preliminaries in Section \ref{section:background_preliminaries}. We prove the main theorem on universal learning in Section \ref{section:main_result}, starting with the case $\Xcal = [0,1]$ and generalizing it to hold for any separable metric space in Section \ref{sec:separable}. We discuss consequences of the result for \emph{inductive} and \emph{self-adaptive} learning in Section \ref{section:consequences}. Finally, in Section \ref{sec:bayesian}, we consider a noisy setting and prove the existence of an optimistically universal Bayes consistent learning rule when the loss is unbounded. Remaining open problems on optimistically universal online learning will be recalled in the conclusion in Section \ref{sec: conclusion}.

\section{Background and Preliminaries}
\label{section:background_preliminaries}
\subsection{Formal setup}
\paragraph{Input space.} We consider the general setup where $(\Xcal, \metric)$ is a separable metric space. We define the set $\Bcal$ of measurable subsets of $\Xcal$ to be the $\sigma$-algebra generated by the topology induced by the metric $\metric$. For more details on this setup, we refer to \cite{parthasarathy2005probability}.

\paragraph{Value space and loss function.} We fix the value space $\Ycal$, and an associated loss function $\ell:\Ycal^2\rightarrow [0,\infty)$ that satisfies 
a relaxed triangle inequality $\forall y_1, y_2, y_3\in \Ycal^3: \ell(y_1,y_3)\leq c_{\ell} (\ell(y_2,y_1)+\ell(y_2,y_3))$, where $c_{\ell}$ is a finite constant, as well as that 

$\forall y\in \Ycal, \ell(y,y)=0$.  
For instance, the squared loss in regression satisfies this with $c_{\ell} = 2$.

While there remain important open questions for universal online learning with \emph{bounded} loss functions \cite{hanneke2021open}, in the present work we focus on \emph{unbounded} loss functions: that is, we assume $\sup_{y_1, y_2\in\Ycal}\ell(y_1,y_2)=\infty$.

\paragraph{Data generation process.} The input data is generated by an infinite-horizon stochastic process $\Xbb = \{X_t\}_{t=1}^{\infty}$, taking its values in $\Xcal$. We impose no constraint on the nature of this process. We assume that the output data $\Ybb = \{Y_t\}_{t=1}^{\infty}$ is generated from $\Xbb$ by a deterministic measurable function $f^*:\Xcal\rightarrow\Ycal$. In other words, we have $\{Y_t\}_{t=1}^{\infty} = \{f^*(X_t)\}_{t=1}^{\infty}$. Note that this framework substantially differs from other setups where the response function is noisy. When we look at a bounded time horizon $t\geq 1$, we will use the following notation: $\Xbb_{\leq t} = \{X_1,...,X_t\}$ and $\Xbb_{< t} = \{X_1,...,X_{t-1}\} $. We will also abuse notations and allow ourselves to use $\Xbb$ to denote the set of all the values taken by the stochastic process. For instance we will write $\#\Xbb = \#\{x\in \Xcal: \{x\}\cap \Xbb \neq \emptyset\}\in \Nbb\cup{\{+\infty\}}$ the number of different values taken by $\Xbb$, we will also write $\#\Xbb_{\leq t}$ the number of different values taken by $\Xbb$ before time $t$.

\paragraph{Online learning rule.} An online learning rule is formally defined as a sequence $\{f_t\}_{t=1}^{\infty}$ of measurable functions $f_t:\Xcal^{t-1}\times \Ycal^{t-1}\times\Xcal \rightarrow \Ycal$. Given $t-1$ training examples of the form $(X_i, f^*(X_i))\in \Xcal\times \Ycal$ and one new sample $X_t$, the learning rule $f_t$ makes prediction $f_t(\Xbb_{< t}, \Ybb_{< t}, X_t)$ for $f^*(X_t)$. As an important example, the \emph{memorization learning rule} $\{f_t\}_{t=1}^{\infty}$ is defined as follows: 
\begin{equation*}
    f_t((x_i)_{i<t},(y_i)_{i<t},x_t) = \begin{cases}
    y_i &\text{if } x_t = x_i,\\
    y_0 &\text{if } x_t \not\in \{x_i\}_{i<t}, 
    \end{cases}
\end{equation*}
where $y_0\in \Ycal$ is some arbitrary default response. 

\paragraph{Learning task.} The goal we pursue is to minimize the online loss defined as,
\begin{equation*}
    \Lcal_{\Xbb}(f_{.},f^*,T) = \frac{1}{T}\sum_{t=1}^{T}\ell(f_t(\Xbb_{< t},\Ybb_{< t}, X_{t}), f^*(X_t)).
\end{equation*}
Specifically, letting 
\begin{equation*}
    \Lcal_{\Xbb}(f_{.},f^*)=\limsup_{T \to \infty}\Lcal_{\Xbb}(f_{.},f^*; T),
\end{equation*}
we would like to guarantee $\Lcal_{\Xbb}(f_{.},f^*)=0 ~~(a.s.)$.  
This motivates the following definition of universal consistency.

\paragraph{Universal consistency and optimistically universal learning rules.} We recall here the full statement of the definitions introduced by \cite{hanneke2021learning}. We say an online learning rule $\{f_t\}_{t=1}^{\infty}$ is \emph{strongly universally consistent} under $\Xbb$ if for every measurable $f^*:\Xcal\rightarrow \Ycal$, we have that $\Lcal_{\Xbb}(f_{.},f^*) = 0~~ (a.s.).$ We say the process $\Xbb$ admits \emph{strong universal online learning} if there exists an online learning rule $\{f_t\}_{t=1}^{\infty}$ that is strongly universally consistent under $\Xbb$. We denote by {\normalfont $\suol$} the set of all processes $\Xbb$ that admit strong universal online learning. An online learning rule $\{f_t\}_{t=1}^{\infty}$ is said to be \emph{optimistically universal} if it is universally consistent under every $\Xbb$ in {\normalfont $\suol$}.\\

\noindent In this line of work, two of the main interests are:  
(1) providing concise conditions characterizing the family $\suol$ in terms of properties of the stochastic process $\Xbb$, 
and (2) identifying particular learning rules that are optimistically universal: that is, learning rules that are universally consistent under every $\Xbb$ in $\suol$. 
Establishing the existence of optimistically universal online learning rules 
remains an open problem for all \emph{bounded} loss functions. 
In the case of \emph{unbounded} losses this question has been settled. 
Specifically, \cite{hanneke2021learning} shows that, for any unbounded loss, there exists optimistically universal online learning rules.
Moreover, \cite{hanneke2021learning} also expresses a condition which characterizes the family $\suol$.  
The condition requires that, for every countable measurable partition of $\Xcal$, the process $\Xbb$ visits a finite number of cells almost surely. This will be referred to as the ``finite measurable visits" ($\fmv$) condition:\\

\noindent{\bf Condition \fmv ~\cite{hanneke2021learning}}~~ {\it Define the set ${\normalfont\fmv}$ as the set of all processes $\Xbb$ satisfying the condition that, for every disjoint sequence $\{A_k\}_{k=1}^{\infty}$ in $\Bcal$ with $\cup_{k=1}^{\infty}A_k=\Xcal$ (i.e., every countable measurable partition),}
\begin{equation*}
    \#\{k\in \mathbb N: A_k\cap \mathbb{X} \neq \emptyset \}  < \infty\quad (a.s).
\end{equation*}

{\vskip 2mm}It is worth noting, however, that the specification and analysis of the optimistically universal learning rule in \cite{hanneke2021learning}, and the proof that Condition $\fmv$ indeed characterizes $\suol$, are all quite complicated.  For instance, the algorithm and its analysis directly rely on a construction of a countable dense subset of the set of measurable functions, under a metric appropriate to learning with unbounded losses.  The learning algorithm then solves a sequence of constraint satisfaction problems specified in terms of this countable dense set of functions, to select one such function as its predictor.  It is therefore desirable to \emph{simplify} the theory, not only for practical reasons, but also to help us to intuitively understand the varieties of processes that admit universally consistent learners, and to clarify what kinds of learning rules can be optimistically universal.  Toward this end, \cite{hanneke2021learning} poses an important open question regarding a potential simplification: is $\suol$ characterized by Condition $\fs$?
\comment{
\noindent\textbf{Open Problem \cite{hanneke2021learning}:} \textit{For unbounded losses, is $\suol = \fs$?}\\
}
The main contribution of the present work is showing that indeed this is \emph{true} (Theorem~\ref{thm:caracterization_suol}).
This fact allows us to dramatically simplify the entire theory of universally consistent online learning with unbounded losses.
Several simplifications are immediate from this: 
\begin{enumerate}
\item This provides a new, stronger characterization of $\suol$. 
\item The proof establishing $\fs = \suol$ is significantly simpler than the original proof that $\fmv = \suol$.
\item The equivalence $\fs = \suol$ immediately implies that the simple \emph{memorization} rule is optimistically universal.
This contrasts with the complicated construction used in the optimistically universal learner of \cite{hanneke2021learning}, 
which solves a sequence of constraint satisfaction problems in terms of a countable dense set of measurable functions.
\end{enumerate}

{\vskip 1mm}\paragraph{A remark on dependency in the problem setup.} At first glance, one might think that the class $\suol$ and its characterization should somehow depend on the specific setup given by $(\Xcal,\metric)$, $\Ycal$ and $\ell$. However, as \cite{hanneke2021learning} showed (and as is implied by our Theorem~\ref{thm:caracterization_suol}), this dependency is very mild. In particular, the existence of an optimistically universal learning rule does not depend on the choice of $(\Ycal,\ell)$ as long as the loss is unbounded: $\sup_{y,y'\in \Ycal}\ell(y,y') = \infty$.  Moreover, our results will hold for any separable metric space $(\Xcal,\metric)$.  

\paragraph{Outline of the proof.} The essential strategy of the proof relies on the fact that $\fs \subset \suol \subset \fmv$. 
The left inclusion is rather obvious.  Indeed, if $\Xbb$ contains a finite number of distinct values (a.s.), even the simple \emph{memorization} learning rule is universally consistent.
For the sake of thoroughness, we include a brief proof of this observation in Section~\ref{sec:fs_sufficient}.  
The second inclusion, that $\suol \subset \fmv$, was shown by \cite{hanneke2021learning} as part of the proof that $\suol = \fmv$. 
We state this result formally in Section~\ref{subsection:necessary_condition}, and for the sake of being self-contained we include its proof in Appendix~\ref{appendix:proof_fmv_necessary}.
Given these inclusions $\fs \subset \suol \subset \fmv$, what remains is establishing that $\fmv = \fs$.  
Establishing this equivalence is the main technical contribution of this work (Theorem~\ref{thm:fs=fmv}). Its proof relies on constructing random measurable partitions of the space $\Xcal$.
We now turn to discussing the details of each of these components.

\subsection{Sufficient condition for $\suol$}
\label{sec:fs_sufficient}

We begin with the easiest of the claimed inclusions: namely, $\fs \subset \suol$.
Recall that condition $\fs$ corresponds to having a finite number of values almost surely, i.e. $\#\Xbb< \infty \quad (a.s.)$. 
While it may be rather obvious that all such processes admit a strong universal learning rule (i.e., they belong to $\suol$), for the sake of thoroughness we present a simple proof of this fact.

\begin{proposition}
\label{prop:fs_sufficient}
${\normalfont\fs}\subset{\normalfont\suol}$.  ~In particular, the memorization rule is universally consistent under every $\Xbb \in {\normalfont\fs}$.
\end{proposition}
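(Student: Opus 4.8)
The plan is to show directly that the memorization rule is strongly universally consistent under every $\Xbb \in \fs$, which immediately gives $\fs \subset \suol$. So fix $\Xbb \in \fs$, fix an arbitrary measurable target $f^* : \Xcal \to \Ycal$, and work on the almost-sure event that $\#\Xbb < \infty$. On this event, let $N = \#\Xbb$ denote the (finite, random) number of distinct values taken by the process.

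The key observation is that the memorization rule incurs a nonzero loss at time $t$ only if $X_t \notin \Xbb_{<t}$, i.e.\ only the \emph{first} time each distinct value of the process appears. Indeed, if $X_t = X_i$ for some $i < t$, then the rule predicts $f^*(X_i) = f^*(X_t)$ (since $Y_i = f^*(X_i)$ and $f^*$ is deterministic), so $\ell(f_t(\Xbb_{<t},\Ybb_{<t},X_t), f^*(X_t)) = \ell(f^*(X_t), f^*(X_t)) = 0$ by the property $\ell(y,y)=0$. Hence the number of time steps $t$ at which the loss is nonzero is at most $N$. At each such step the loss is some finite value (the loss $\ell$ takes values in $[0,\infty)$, so it is finite — though possibly unbounded over $\Ycal^2$, each individual evaluation is a finite real number). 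Therefore $\sum_{t=1}^{T} \ell(f_t(\Xbb_{<t},\Ybb_{<t},X_t), f^*(X_t))$ is bounded by a finite constant (depending on $\Xbb$ and $f^*$ but not on $T$) for all $T$, and consequently
\begin{equation*}
    \Lcal_{\Xbb}(f_{.},f^*) = \limsup_{T \to \infty} \frac{1}{T}\sum_{t=1}^{T}\ell(f_t(\Xbb_{<t},\Ybb_{<t},X_t), f^*(X_t)) = 0.
\end{equation*}
Since this holds on an almost-sure event and for every measurable $f^*$, the memorization rule is strongly universally consistent under $\Xbb$, so $\Xbb \in \suol$.

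There is essentially no hard part here; the only point requiring minor care is measurability — one should note that the memorization learning rule $\{f_t\}$ as defined is a sequence of measurable functions (membership $x_t \in \{x_i\}_{i<t}$ is a measurable event in a separable metric space, since singletons and finite unions are Borel), so it is a legitimate online learning rule in the sense of the definition, and the event $\{\#\Xbb < \infty\}$ has probability one by hypothesis. I would also remark in passing that the argument in fact shows something slightly stronger: on the event $\{\#\Xbb < \infty\}$ the running average loss is not merely $o(1)$ but $O(1/T)$, with the hidden constant being the total loss accumulated over the (finitely many) novel instances.
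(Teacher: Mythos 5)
Your proof is correct and follows essentially the same argument as the paper: the memorization rule errs only on the finitely many first occurrences of distinct values, each such loss is finite (the paper bounds them by $\rvM = \max_t \ell(y_0,f^*(X_t))$, finite a.s.\ as a maximum over a finite set of values), so the cumulative loss is bounded and the average vanishes almost surely.
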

\begin{proof}
We will show that the memorization learning rule is universally consistent under every $\Xbb$ that takes a finite number of values almost surely. We can formally prove this result as follows. Let $\Xbb\in \fs$ be a given stochastic process and let $\{f_t\}_{t=1}^{\infty}$ be the memorization learning rule defined earlier. Observe that, for any measurable target function $f^* : \Xcal \to \Ycal$, the (random) quantity $\rvM = \max_{t \in \N}\ell(y_0,f^*(X_{t}))$ is always finite (a.s.), as it is a maximum over a finite set: $\rvM<\infty~~ (a.s.)$. Now observe that the memorization rule makes at most $\#\Xbb$ errors, each of value at most $\rvM$. Therefore, $0\leq \hat{\Lcal}_{\Xbb}(f_{.},f^*,T) \leq \frac{1}{T}\rvM \cdot \#\Xbb \xrightarrow[T\rightarrow\infty]{} 0\quad (a.s.)$.
\comment{The proof above is very efficient, but hides the complexity of the reasoning on stochastic processes. In order to prepare for the next section, we present a more elementary proof. For $k\in \Nbb$, we define $C_k\in \Nbb$ and $M_k$ such that:
\begin{equation*}
    \Pbb(|\Xbb| \leq C_k)\geq 1- \frac{1}{2^k}\quad \text{and} \quad \Pbb(\rvM\leq M_k)\geq 1-\frac{1}{2^k}.
\end{equation*}
Which gives that for any $k\in \Nbb$
\begin{equation*}
    \Pbb\left(\lim_{n\rightarrow \infty}\hat{\Lcal}_{\Xbb}(f_{.},f^*,n) = 0\right)\geq 1-\frac{1}{2^{k-1}}.
\end{equation*}
In other words,
\begin{equation*}
    \lim_{n\rightarrow \infty}\hat{\Lcal}_{\Xbb}(f_{.},f^*,n) = 0 \quad (a.s.).
\end{equation*}
}
\end{proof}
An important remark is that this condition is not \emph{testable}: there does not exists a consistent hypothesis test for $\fs$. In other terms it is not possible to decide from the stream of input data $\Xbb$ whether the process satisfies $\fs$ or not. Formally, a \emph{hypothesis test} refers to a sequence of possibly random decision functions $\hat t_n:\Xcal \to\{0,1\}$. We then say that a test is \emph{consistent} for a class of processes $\Ccal$ if for any process $\Xbb$, $\hat t_n(\Xbb_{\leq n}) \to \mb 1_{\Xbb\in\Ccal}$ in probability. 
\begin{proposition}
\label{prop:no_test_fs}
If $\Xcal$ is infinite, there is no consistent hypothesis test for condition $\normalfont\fs$.
\end{proposition}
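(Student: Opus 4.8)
The plan is an indistinguishability (diagonalization) argument. Assume for contradiction that $\{\hat t_n\}_{n\ge 1}$ is a consistent test for $\fs$. Since $\Xcal$ is infinite, fix distinct points $x_1,x_2,\ldots\in\Xcal$. I will construct a single deterministic process $\Xbb^\star\notin\fs$ and a sequence of deterministic processes $\Xbb^{(j)}\in\fs$, together with a strictly increasing sequence of horizons $m_1<m_2<\cdots$, such that $\Xbb^{(j)}$ and $\Xbb^\star$ have the same first $m_j$ coordinates. Because a consistent test must eventually answer $1$ on each $\Xbb^{(j)}$ but must eventually answer $0$ on $\Xbb^\star$, this coincidence of prefixes will yield a contradiction.

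The construction is inductive. Set $m_0=0$. Having chosen $m_1,\ldots,m_{j-1}$, let $\Xbb^{(j)}$ be the deterministic process taking value $x_i$ on each block $(m_{i-1},m_i]$ for $i<j$ and value $x_j$ on all of $(m_{j-1},\infty)$; note that $\Xbb^{(j)}$ is determined entirely by $m_1,\ldots,m_{j-1}$ and takes only $j$ distinct values, so $\Xbb^{(j)}\in\fs$. Consistency of the test then gives $\hat t_n(\Xbb^{(j)}_{\le n})\to\mathbf 1_{\Xbb^{(j)}\in\fs}=1$ in probability, so we may pick $m_j>m_{j-1}$ large enough that $\Pbb\big(\hat t_{m_j}(\Xbb^{(j)}_{\le m_j})=1\big)\ge 1-2^{-j}$. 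Finally, let $\Xbb^\star$ be the deterministic process taking value $x_i$ on $(m_{i-1},m_i]$ for every $i\ge 1$; since all blocks are nonempty, $\Xbb^\star$ takes infinitely many distinct values, hence $\Xbb^\star\notin\fs$.

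The key point is that $\Xbb^{(j)}_{\le m_j}$ and $\Xbb^\star_{\le m_j}$ are the \emph{same} deterministic element of $\Xcal^{m_j}$: both equal $x_1,\ldots,x_{j-1}$ on the early blocks and then $x_j$ on all of $(m_{j-1},m_j]$ ($\Xbb^{(j)}$ because it is frozen at $x_j$ past time $m_{j-1}$, and $\Xbb^\star$ because its $j$-th block is exactly $(m_{j-1},m_j]$). Feeding the same fixed input to the (possibly randomized) map $\hat t_{m_j}$ yields identically distributed outputs, so $\Pbb\big(\hat t_{m_j}(\Xbb^\star_{\le m_j})=1\big)=\Pbb\big(\hat t_{m_j}(\Xbb^{(j)}_{\le m_j})=1\big)\ge 1-2^{-j}$. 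Since $m_j\to\infty$, this shows $\Pbb(\hat t_n(\Xbb^\star_{\le n})=1)$ does not tend to $0$, contradicting the requirement that $\hat t_n(\Xbb^\star_{\le n})\to\mathbf 1_{\Xbb^\star\in\fs}=0$ in probability. Hence no consistent test exists.

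I do not anticipate a serious obstacle. The one point requiring care is the order of operations: the test must have ``committed'' to answering $1$ at the finite horizon $m_j$ \emph{before} $m_j$ is used to extend $\Xbb^\star$ beyond that point. This is precisely why $\Xbb^{(j)}$ is defined to freeze at $x_j$ right after time $m_{j-1}$ (so that it depends only on $m_1,\ldots,m_{j-1}$), and only then is $m_j$ read off from the convergence statement; everything else is routine. The same scheme, incidentally, shows the analogous non-testability for the weaker condition $\fmv$.
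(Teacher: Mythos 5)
Your proposal is correct and follows essentially the same diagonalization-by-prefix-agreement argument as the paper: the paper likewise builds a single deterministic process by inductively choosing horizons at which the test has already committed, alternating between an eventually-constant ($\fs$) extension and a new-points extension, and concludes that the test's outputs cannot converge in probability. Your version is a one-sided variant of this (you only invoke consistency on the $\fs$-side processes $\Xbb^{(j)}$ to pick the horizons, then contradict convergence to $0$ on the single non-$\fs$ limit process $\Xbb^\star$), which is equally valid and, if anything, marginally cleaner.
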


\begin{proof}
This is a consequence from Theorem \ref{thm:caracterization_suol} and the fact that there is no consistent hypothesis test for $\suol$ when $\Xcal$ is infinite, shown in \cite{hanneke2021learning} (Theorem 59). For completeness, we provide a direct and simplified proof in Appendix \ref{appendix:no_consistent_test_fs}.
\end{proof}
This justifies the terminology ``optimistic" in \textit{optimistically universal learning rule} in the sense that belonging to the set of sequences for which universal learning is achievable, which we will prove is equal to $\fs$, is a non-testable assumption.

\subsection{Necessary condition for $\suol$}
\label{subsection:necessary_condition}
We now recall a necessary condition for a stochastic process $\Xbb$ to admit strong universal online learning. It was shown that condition $\fmv$, which requires that $\Xbb$ will only visit a finite number of zones for all given countable measurable partitions of $\Xcal$, is necessary for universal learning.

\begin{theorem}[\cite{hanneke2021learning}]
\label{thm:fmv_necessary}
$\normalfont \suol \subset \fmv$.
\end{theorem}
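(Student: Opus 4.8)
The plan is to prove the contrapositive: if $\Xbb \notin \fmv$ then $\Xbb \notin \suol$. So suppose $\Xbb \notin \fmv$ and fix a countable measurable partition $\{A_k\}_{k\geq 1}$ of $\Xcal$ witnessing this, so that the event $E := \{\#\{k \in \Nbb : A_k \cap \Xbb \neq \emptyset\} = \infty\}$ has probability $p := \Pbb(E) > 0$. Fix an arbitrary (possibly randomized) online learning rule $\{f_t\}_{t=1}^{\infty}$; it then suffices to exhibit a measurable target $f^* : \Xcal \to \Ycal$ with $\Pbb(\Lcal_{\Xbb}(f_{.}, f^*) > 0) > 0$, since $\{f_t\}$ is arbitrary. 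Following the usual recipe, I would first construct a \emph{random} target $f^*$ depending on auxiliary randomness independent of both $\Xbb$ and the rule, prove the learner fails in expectation over that randomness, and then fix a good realization to obtain a deterministic target.

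\textbf{Construction of the random target.} For each $k$ let $\tau_k := \inf\{t : X_t \in A_k\} \in \Nbb \cup \{\infty\}$, a function of $\Xbb$ alone. Since $\{\tau_k > q\} \downarrow \{\tau_k = \infty\}$ as $q \to \infty$, I can pick an integer $\Delta_k \geq 1$ with $\Pbb(\Delta_k < \tau_k < \infty) \leq 2^{-k}$. Using that $\ell$ is \emph{unbounded}, choose $a_k, b_k \in \Ycal$ with $\ell(a_k, b_k) \geq 8 c_\ell \Delta_k$; by the relaxed triangle inequality this yields $\max\{\ell(y, a_k), \ell(y, b_k)\} \geq \ell(a_k, b_k)/(2 c_\ell) \geq 4 \Delta_k$ for every $y \in \Ycal$. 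Let $(\xi_k)_{k\geq 1}$ be i.i.d.\ fair coins independent of everything else and define $f^*$ to equal $a_k$ on $\{x \in A_k : \xi_k = 0\}$ and $b_k$ on $\{x \in A_k : \xi_k = 1\}$; for each realization of $(\xi_k)$ this is measurable. The crucial observation is that the prediction $\hat Y_{\tau_k}$ produced at the first visit to $A_k$ depends only on $\Xbb$, the rule's internal randomness, and $(f^*(X_s))_{s < \tau_k}$, and this last list involves only coins $\xi_j$ with $j \neq k$ (no $X_s$ with $s < \tau_k$ lies in $A_k$). Hence, conditionally on $\{\tau_k < \infty\}$ and on all randomness other than $\xi_k$, the value $f^*(X_{\tau_k})$ is an independent fair choice between $a_k$ and $b_k$, so with probability at least $1/2$ we get $\ell(\hat Y_{\tau_k}, f^*(X_{\tau_k})) \geq 4 \Delta_k$.

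\textbf{Putting it together.} Since $\sum_k \Pbb(\Delta_k < \tau_k < \infty) \leq \sum_k 2^{-k} < \infty$, Borel--Cantelli implies that almost surely only finitely many cells are ``visited late''; therefore on $E$ there are almost surely infinitely many \emph{timely} cells, meaning cells $k$ with $\tau_k \leq \Delta_k < \infty$. List the timely cells in increasing order of first-visit time as $L_1, L_2, \dots$ and let $D_i := \{\ell(\hat Y_{\tau_{L_i}}, f^*(X_{\tau_{L_i}})) \geq 4 \Delta_{L_i}\}$. Because $D_1, \dots, D_{i-1}$ are determined by $\Xbb$, the internal randomness, and coins of cells visited strictly before $\tau_{L_i}$ (a collection excluding $L_i$), the conditional independence above upgrades to $\Pbb(D_i \mid \Xbb, \text{internal randomness}, D_1, \dots, D_{i-1}) \geq 1/2$, and a telescoping (conditional Borel--Cantelli) argument then forces $\#\{i : D_i\} = \infty$ almost surely on $E$. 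On each $D_i$ the cumulative loss through time $\tau_{L_i}$ is at least $4 \Delta_{L_i} \geq 4 \tau_{L_i}$ by timeliness, so $\Lcal_{\Xbb}(f_{.}, f^*, \tau_{L_i}) \geq 4$; since $\tau_{L_i} \to \infty$ along the infinitely many such $i$, this gives $\Lcal_{\Xbb}(f_{.}, f^*) \geq 4$ on $E$ almost surely, whence $\Pbb(\Lcal_{\Xbb}(f_{.}, f^*) \geq 4) \geq p > 0$. Taking expectations over $(\xi_k)$, some fixed realization of the coins yields a \emph{deterministic} measurable target with $\Pbb(\Lcal_{\Xbb}(f_{.}, f^*) \geq 4) > 0$, so $\{f_t\}$ is not strongly universally consistent under $\Xbb$; as $\{f_t\}$ was arbitrary, $\Xbb \notin \suol$.

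\textbf{Main obstacle.} The one genuinely delicate point is the \emph{timing}: the process may postpone visiting each new cell for an arbitrarily long, random time, so a target chosen obliviously cannot ensure that a first-visit loss dominates the elapsed time. This is exactly what is repaired by taking $\Delta_k$ to be a high quantile of $\tau_k$ --- possible only because the loss is unbounded, which is precisely where that hypothesis enters (for bounded losses the statement is false and the right characterization is genuinely different). The remaining work is bookkeeping: keeping careful track of which coins each prediction can depend on so that the conditional independence and both Borel--Cantelli-type arguments are valid, together with the routine de-randomization at the end.
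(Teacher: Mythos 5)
Your proposal is correct and follows essentially the same approach as the paper's proof: a random two-valued target on each cell of the witnessing partition, with the two values separated by a loss scaled to a high-probability quantile of that cell's first-visit time (possible only because $\ell$ is unbounded), the observation that at a first visit the learner's prediction is independent of the cell's coin, a Borel--Cantelli argument to discard late visits, and a final de-randomization of the coins. The only difference is in the concluding step --- you argue almost surely via a conditional (second) Borel--Cantelli argument before fixing the coins, whereas the paper averages the loss over the random target and applies Fatou's lemma --- and both routes are valid.
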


\noindent
We present the proof idea here, while for the purpose of being self-contained, a brief version of the full proof is included in Appendix \ref{appendix:proof_fmv_necessary}. Suppose the process $\Xbb$ does not satisfy $\fmv$, then there exists a measurable partition $\{A_k\}_{k=1}^{\infty}$ such that with positive probability, $\Xbb$ visits an infinite number of the $A_k$. Now define a function $f^*$ that is randomly piece-wise constant, i.e. constant on each $A_k$ and taking random value $f^*(x\in A_k) \in \{y_{k,0},y_{k,1}\} \in \Ycal$ where $\ell(y_{k,0},y_{k,1})$ grows large at a sufficiently fast rate. Any online learning rule will fail to predict $Y_t = f^*(X_t)$ an infinite number of times, inducing non-vanishing loss. The reason is that the learning rule cannot leverage any of the training data $(\Xbb_{<t},\Ybb_{<t})$ when $A_k$ is visited for the first time at time $t$. 

\subsection{Open Problem 4}
In the previous sections, we recalled the inclusions $\fs\subset\suol\subset\fmv$. This allows for a concise expression of the conjecture formulated by \cite{hanneke2021learning} and referred to as ``Open Problem 4" (which, in light of the result of \cite{hanneke2021learning} that $\suol = \fmv$, is equivalent to the formulation of the problem stated earlier in Section~\ref{section:background_preliminaries}). \\

\noindent{\bf Open Problem 4 (\citep{hanneke2021learning})} {\it Is it true that $\normalfont \fmv = \fs$?}\\

\noindent
We will prove that this equality holds for all separable metric spaces $\Xcal$. This in turn implies that $\suol = \fs$ and therefore ensures that ``memorization", which we already saw is universally consistent for all processes in $\fs$ (Proposition~\ref{prop:fs_sufficient}), is an optimistically universal learning rule (thus establishing Theorem~\ref{thm:memorization_opt_universal}). The solution to the open problem will be detailed in Section \ref{section:main_result} and generalised to all separable metric spaces in Section \ref{sec:separable}. We conclude this section by giving some additional inspiration for the proofs that will follow. 

\noindent
\paragraph{Remarks on Open Problem 4.} In words, the question asked by Open Problem 4 is whether the set of countable measurable partitions is sufficiently large to separate all stochastic processes that take an infinite number of values.\\

\noindent
It was already observed by \cite{hanneke2021learning} that when $\Xcal$ is countable or when $\Xbb$ is deterministic, $\fs$ and $\fmv$ are equal. However, both these setups come with a natural partition: if $\Xcal$ is countable, $\{\{x\} : x\in \Xcal\}$ becomes a countable measurable partition of $\Xcal$, and when 
$\Xbb$ is deterministic $\{ \{x\} : \{x\} \cap \Xbb \neq \emptyset \} \cup \{ \Xcal\setminus \Xbb \}$ will also isolate all the different values taken by $\Xbb$.\\

\noindent
In the uncountable case, for instance when $\Xcal= \Rbb$, we aim to define a partition $\{A_k\}_{k=1}^{\infty}$ that scatters the space. We want to minimize the chance that two values taken by the process, say $X_t\neq X_{t'}$, fall in the same $A_k$. A classical and tempting way to build such a partition would be using the axiom of choice \cite{vitali1905sul,stern1985probleme}. Define the equivalence relation $x\sim_{\Qbb} y \iff x-y\in \Qbb$, where $\Qbb$ can be enumerated $\Qbb = \{q_1,q_2,...\}$. Now for each of the equivalence classes of the form $\{x\}+\Qbb$, \textit{choose} one representer. Denote by $A$ the set of all representers and observe that $\{A+\{q\}\}_{q\in \Qbb}$ makes a countable partition of $\Rbb$. Note that two different values of $\Xbb$, say $X_t\neq X_{t'}$, fall in the same equivalence class only if $X_t-X_{t'}$ was chosen as a representer. This event could be made very rare if we were to shift all representers by a uniform random variable, or to choose the representer at random within their class of equivalence. The reason why this does not prove the result is that the corresponding partition $\{A_k\}_{k=1}^{\infty}$ is not measurable.\\

\noindent
Another idea to create such a random partition $\{A_k\}_{k=1}^{\infty}$ would be to assign each $x\in \Rbb$ to a set $A_{k(x)}$ where the index $k(x)\in \Nbb$ is chosen independently at random following an exponential law $\Ecal(\frac{1}{2})$: $\Pbb(k(x)=k)=\frac{1}{2^k}$. The indices $\{k(X_1),k(X_2),...\}$ to which the elements of the sequence $\Xbb = \{X_1, X_2, ...\}$ are assigned, are almost surely unbounded when $\#\Xbb = +\infty$, disproving condition $\fmv$. We will refer to this construction as the partition $\Pcal$ as it will later be a useful inspiration. Unfortunately, as such, $\mathcal{P}$ not define a proper partition because the sets $A_k$ are not measurable in general. To solve this issue, instead of defining point-wise random sets, we will use countable union of small intervals. Depending on the scale of the process $\Xbb$, these sets will give same behaviour as the parts of $\Pcal$. We will make this idea more precise in the following paragraph.\\ 

\noindent
We first recall a construction of dense open sets of $\Rbb$ with measure at most $\epsilon>0$. Following  a classical argument, one can consider the union of open intervals $\cup_{i\geq 1}\left(q_i-\frac{\epsilon}{2^i},q_i+\frac{\epsilon}{2^i}\right)$, where $\{q_1,q_2,...\}$ are i.i.d. sampled from some probability density of full support. If we denote the remainders $R_k = \cup_{i\geq k}\left(q_i-\frac{\epsilon}{2^i},q_i+\frac{\epsilon}{2^i}\right)$ and consider the partition $\{A_k\}_{k=0}^{\infty}$ defined by $A_k = R_{k}\setminus R_{k+1}$ where $R_0=\Rbb$, one could hope that any sequence $\Xbb$ taking infinite values will visit an infinite number of the $A_k$. In fact, this is true if the convergence rate of $\Xbb$ is not too fast but not in the general case. We will therefore use a decay rate adapted to the process $\Xbb$ through a parameter $\delta_k$ defined as follows,
\begin{equation*}
    \delta_k = \min\!\left\{|x-y| \Big| x,y\in \Xbb_{\leq N}, \#\Xbb_{\leq N}\geq 2^{2k+2}\right\}.
\end{equation*}
A key intuition is that the first $k$ distinct points visited by $\Xbb$ have scale $\delta_k$. Thus, a remainder $R_i$ of smaller scale -- such that the length of the intervals defining $R_i$ is $\ll\delta_k$ -- will appear uniformly random to the first $k$ distinct inputs, similarly to the point-wise random sets from the partition $\Pcal$ introduced above.

\section{Main Result}
\label{section:main_result}
In this section, we state and prove the equivalence of $\fs$ and $\fmv$ therefore guaranteeing that memorization is an optimistically universal learning rule in the unbounded setup.  The following result represents the main technical contribution of this work.

\begin{theorem}[Main Result]
\label{thm:fs=fmv}
For any separable metric space $(\Xcal,\metric)$, $\normalfont \fmv = \fs$.
\end{theorem}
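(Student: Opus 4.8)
The plan is to prove the two inclusions separately. The inclusion $\fs \subseteq \fmv$ is immediate: if $\#\Xbb < \infty$ almost surely, then for any countable measurable partition $\{A_k\}_{k\geq 1}$ we have $\#\{k : A_k \cap \Xbb \neq \emptyset\} \leq \#\Xbb < \infty$ almost surely. The content is $\fmv \subseteq \fs$, which I would establish in contrapositive form: given $\Xbb$ with $p := \Pbb(\#\Xbb = \infty) > 0$, I would exhibit one countable measurable partition of $\Xcal$ that $\Xbb$ visits on infinitely many cells with positive probability. I would first do the case $\Xcal = [0,1]$ and then reduce the general separable metric space to it. For the reduction, note that every separable metric space embeds homeomorphically into the Hilbert cube $[0,1]^{\Nbb}$, which is Polish and uncountable, hence Borel isomorphic to $[0,1]$; composing gives a Borel-measurable injection $\psi : \Xcal \to [0,1]$. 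If $\{A_k\}$ witnesses $\psi(\Xbb) \notin \fmv$ on $[0,1]$, then $\{\psi^{-1}(A_k)\}$ is a countable measurable partition of $\Xcal$ visited by $\Xbb$ exactly as often as $\{A_k\}$ is visited by $\psi(\Xbb)$; since $\psi$ is injective, $\#\psi(\Xbb) = \#\Xbb$, so this reduction is faithful. Crucially, only measurability of $\psi$ is used here, not measurability of its inverse.

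So assume $\Xcal = [0,1]$ with Lebesgue measure. By a Fubini argument it suffices to construct a \emph{random} countable measurable partition whose randomness is independent of $\Xbb$ and which, jointly over both sources of randomness, is visited on infinitely many cells with positive probability; a good realization of the randomness then yields a deterministic partition. The construction uses a scale adapted to the law of $\Xbb$. Conditionally on $\{\#\Xbb = \infty\}$, the minimum pairwise distance $D_m$ among the first $m$ distinct values of $\Xbb$ is almost surely positive, so I can pick a deterministic decreasing sequence $\delta_k \downarrow 0$ with $\delta_k \leq 2^{-k}$ and $\Pbb(D_{m_k} < \delta_k \mid \#\Xbb = \infty) \leq 2^{-k-1}$, where $m_k := 2^{2k+2}$. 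Then the event $G$ on which $\#\Xbb = \infty$ and the first $m_k$ distinct values of $\Xbb$ are pairwise $\geq \delta_k$-separated for every $k$ simultaneously has $\Pbb(G) \geq p/2 > 0$.

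Now set $r_k := \delta_k/4$, let $v_k \asymp r_k$ be the Lebesgue measure of a radius-$r_k$ interval, and put $N_k := \lceil 1/(m_k v_k)\rceil$. At ``level $k$'' throw $N_k$ independent uniform intervals of radius $r_k$; let $J_k$ be their union, $R_K := \bigcup_{k \geq K} J_k$, and take as partition $A_0 := [0,1]\setminus R_1$, $A_K := R_K \setminus R_{K+1}$ for $K \geq 1$, and the leftover cell $A_\infty := \bigcap_{K\geq 1} R_K$. Two estimates drive the proof. First, $\sum_k N_k v_k \leq \sum_k (1/m_k + v_k) < \infty$ --- here the $\delta_k$'s cancel, since $\sum 1/m_k$ converges and $\sum v_k \asymp \sum \delta_k \leq \sum 2^{-k}$ converges --- so the partition is well defined, $\Pbb_q(\Xbb \cap R_K \neq \emptyset) \to 0$, and hence almost surely $\Xbb$ avoids the cell $A_\infty$; on that event ``$\Xbb$ visits infinitely many cells'' is equivalent to ``$\Xbb \cap R_K \neq \emptyset$ for every $K$''. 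Second, fix $K$ and work on $G$: the first $m_K$ distinct values of $\Xbb$ are $\delta_K$-separated, so the radius-$r_K$ balls around them are pairwise disjoint and fill total measure $\geq m_K v_K$; since the $N_K$ level-$K$ intervals have independent uniform centers, the probability that at least one of them lands on that region is $\geq 1-(1-m_K v_K)^{N_K} \geq 1 - e^{-N_K m_K v_K} \geq 1-e^{-1}$. Therefore $\Pbb(\Xbb \cap R_K \neq \emptyset) \geq (p/2)(1-e^{-1})$ for every $K$; since the events $\{\Xbb \cap R_K \neq \emptyset\}$ decrease in $K$, their intersection has probability $\geq (p/2)(1-e^{-1}) > 0$, and this (together with the almost sure event that $\Xbb$ misses $A_\infty$) forces $\Xbb$ to visit infinitely many cells. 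Applying Fubini yields a deterministic such partition, contradicting $\Xbb \in \fmv$.

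I expect the main obstacle to be the realization, and quantitative implementation, that the partition's scale must be tailored to the process: since $\Xbb$ can accumulate its distinct values arbitrarily fast, no fixed ``scattering'' partition works for every process, and the schedule $(\delta_k, r_k, N_k)$ must be chosen so that the total measure $\sum_k N_k v_k$ converges --- needed both for the partition to be well defined and for the process to ``escape'' to arbitrarily high levels --- while, at the same time, every individual level $K$ is hit with probability bounded away from $0$ uniformly in $K$. The cancellation of $\delta_k$ in $\sum_k N_k v_k = \sum_k O(1/m_k)$, obtained by taking $N_k \asymp 1/(m_k v_k)$ with $m_k$ growing geometrically, is the crux. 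A secondary point worth isolating is that the separable-to-$[0,1]$ reduction only requires a \emph{measurable} injection $\Xcal \to [0,1]$ (so that preimages of partition cells remain measurable), not a bimeasurable one, which sidesteps any descriptive-set-theoretic pathology of general (non-Polish) separable metric spaces.
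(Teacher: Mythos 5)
Your proof is correct, and for the core case $\Xcal=[0,1]$ it follows essentially the paper's construction: scales $\delta_k$ calibrated to the minimum pairwise distance among roughly $2^{2k+2}$ distinct values of the process, unions of randomly centered intervals at each scale, the tail sets $R_K$, the cells $R_K\setminus R_{K+1}$ together with a residual cell $\bigcap_K R_K$ that the process avoids almost surely, and a Fubini step to extract a deterministic partition. Two deviations are worth recording. First, your per-level estimate only yields a hit probability bounded below by a universal constant, and you conclude via monotonicity of the events $\{\Xbb\cap R_K\neq\emptyset\}$ and continuity from above; the paper instead uses more intervals per level (total measure about $2^{-k}$), gets miss probability $e^{-2^{k+1}}$, and applies Borel--Cantelli. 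Both finishes are valid, since only positive probability is needed. Second, and more substantially, your treatment of general separable metric spaces is genuinely different: you reduce to $[0,1]$ through a measurable injection $\psi$ (Urysohn embedding into the Hilbert cube composed with a Borel isomorphism onto $[0,1]$) and pull a bad partition back through $\psi^{-1}$. Your observation that only forward measurability and injectivity are needed---no bimeasurability and no Borelness of the image---is exactly right, so the reduction works for every separable metric space, not just standard Borel ones. The paper only invokes Kuratowski for standard Borel spaces and handles the general case by a direct argument built on Lemma~\ref{lem:effectively-totally-bounded} (under $\fmv$, the process is covered, up to probability $\varepsilon$, by finitely many disjoint measurable sets of diameter $\delta$, from which cells are sampled uniformly); your reduction is a legitimate and arguably simpler alternative to that section.

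One slip to repair: the intermediate claim that $\Pbb_q(\Xbb\cap R_K\neq\emptyset)\to 0$ is false as stated (for a realization of $\Xbb$ that is dense in $[0,1]$ this probability equals $1$ for every $K$), and it would in any case contradict your own uniform lower bound $(p/2)(1-e^{-1})$ on the same quantity. What the summability $\sum_k N_k v_k<\infty$ actually gives is the pointwise statement $\Pbb(X_t\in R_K)\to 0$ for each fixed $t$, hence $\Pbb(X_t\in A_\infty)=0$, and then a countable union bound over $t$ yields $\Xbb\cap A_\infty=\emptyset$ almost surely---which is precisely what the remainder of your argument uses, and is the content of Lemma~\ref{lemma:notinA-1}. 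A cosmetic point: intervals centered near the endpoints of $[0,1]$ are truncated, so the hit region per center has measure at least $m_K r_K$ rather than $m_K v_K$; this only degrades the constant $1-e^{-1}$ to $1-e^{-1/2}$ and affects nothing else.
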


\noindent
Together with Proposition \ref{prop:fs_sufficient} and Theorem \ref{thm:fmv_necessary}, this result implies Theorem \ref{thm:caracterization_suol} and Theorem \ref{thm:memorization_opt_universal}. In this section, we prove the result for $\Xcal = [0,1]$, as it provides a direct and simple construction. The proof will then be generalised to all separable metric spaces in Section \ref{sec:separable}.

\begin{proposition}
\label{thm:circle}
If $\mathcal X = [0,1]$ with its usual topology, $\normalfont\fmv = \fs$.
\end{proposition}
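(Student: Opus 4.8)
The inclusion $\fs\subseteq\fmv$ is immediate, since a process taking finitely many distinct values visits at most that many cells of any countable partition (it also follows from $\fs\subseteq\suol\subseteq\fmv$). So the plan is to prove the reverse inclusion $\fmv\subseteq\fs$ in contrapositive form: given a process $\Xbb$ on $[0,1]$ with $p:=\Pbb(\#\Xbb=\infty)>0$, I will construct a \emph{single} countable measurable partition of $[0,1]$ whose cells $\Xbb$ visits infinitely many of, with positive probability, thereby showing $\Xbb\notin\fmv$. Following the heuristic sketched above, the partition is built from an auxiliary i.i.d.\ sequence $q_1,q_2,\ldots$ of uniform points in $[0,1]$, independent of $\Xbb$: around each $q_i$ place an interval $I_i$ of half-length $\eta_i$, set $R_k=\bigcup_{i\ge k}I_i$ with $R_0=[0,1]$, take $A_k=R_k\setminus R_{k+1}$, and absorb the tail set $R_\infty=\bigcap_k R_k$ into $A_0$. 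Writing $k(x)$ for the largest $k$ with $x\in R_k$, the process visits infinitely many cells precisely when $\sup_t k(X_t)=\infty$ (and all $k(X_t)<\infty$).

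The half-lengths $\eta_i$ must be adapted to the scale of $\Xbb$ — this is the crux. Group the indices into consecutive blocks, block $k$ holding $m_k$ indices of common half-length $\eta_k$. Let $N_k=2^{2k+2}$ and recall the parameter $\delta_k$ from the discussion above (the smallest gap among the first $N_k$ distinct values of $\Xbb$); since $\delta_k>0$ whenever it is defined, one may choose \emph{deterministic} constants $c_k>0$, depending only on the law of $\Xbb$, with $\Pbb(\delta_k<c_k,\ \#\Xbb\ge N_k)\le 2^{-k}$. By the first Borel--Cantelli lemma, almost surely on $\{\#\Xbb=\infty\}$ we have $\delta_k\ge c_k$ for every large $k$; let $G$ be this event, so $\Pbb(G)=p$. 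Now choose $\eta_k<c_k/2$ and $m_k$ large enough that $m_kN_k\eta_k\to\infty$ quickly while $\sum_k m_k\eta_k<1$ — this is easily arranged (e.g.\ $m_k\eta_k$ of order $2^{-k}$ up to a small constant, and $m_k$ correspondingly large). Summability makes $R_1$ have measure below $1$ and $R_k$ have measure tending to $0$, so $R_\infty$ is Lebesgue-null; moreover every fixed point lies in $R_\infty$ with $q$-probability $0$, hence almost surely no $X_t$ falls in $R_\infty$ and the reformulation above in terms of $\sup_t k(X_t)$ is valid.

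The heart of the argument is a conditional Borel--Cantelli estimate. Condition on $\Xbb$ and, for each $k$, let $B_k$ be the event that some one of the first $N_k$ distinct values of $\Xbb$ lies in a block-$k$ interval; on $B_k$ the witnessing value $x$ has $k(x)\ge k$, so $\{B_k\ \text{infinitely often}\}\subseteq\{\sup_t k(X_t)=\infty\}$ on the event that no $X_t$ lies in $R_\infty$. On $G$ the first $N_k$ distinct values are pairwise at distance $\ge c_k>2\eta_k$, so their $\eta_k$-neighbourhoods are pairwise disjoint, of total length at least $N_k\eta_k$; since the $m_k$ block-$k$ points are i.i.d.\ uniform and independent of $\Xbb$, this gives $\Pbb(B_k^{\,c}\mid\Xbb)\le(1-N_k\eta_k)^{m_k}\le e^{-m_kN_k\eta_k}$, which is summable in $k$ by the choice of $m_k,\eta_k$. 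Hence $\Pbb(B_k\ \text{infinitely often}\mid\Xbb)=1$ on $G$, and integrating over $G$ yields $\Pbb(\Xbb\ \text{visits infinitely many cells})\ge\Pbb(G)=p>0$, the probability now being taken over both $\Xbb$ and $(q_i)$. Finally, since $(q_i)$ is independent of $\Xbb$, Fubini's theorem furnishes a fixed realization $q^\star$ of $(q_i)$ for which the resulting \emph{deterministic} countable measurable partition $\{A_k\}$ is visited in infinitely many cells by $\Xbb$ with positive probability; thus $\Xbb\notin\fmv$. Together with $\fs\subseteq\fmv$ this establishes $\fmv=\fs$ for $\Xcal=[0,1]$.

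I expect the real obstacle to be exactly this scale-adaptation: no fixed decay rate for the $\eta_i$ can work against a process whose distinct values may cluster arbitrarily fast, which is what forces the $\eta_k$ (via $m_k$ and $c_k$) to be tuned to $\delta_k$ and to the law of $\Xbb$. One must also be careful that this dependence is legitimate — it enters only through deterministic constants, and the final partition is de-randomized to a genuinely deterministic one via Fubini. The remaining points (that $R_\infty$ is negligible, and harmless boundary effects at $0$ and $1$, which one can also sidestep by working on the circle and transferring) are routine.
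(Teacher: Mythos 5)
Your proposal is correct and follows essentially the same route as the paper's proof: a random countable partition built from blocks of intervals centered at i.i.d.\ uniform points, with interval lengths adapted (via deterministic constants chosen so that the minimum gap among the first $\sim 2^{2k+2}$ distinct values exceeds them with high probability), remainders $R_k$ and cells $A_k=R_k\setminus R_{k+1}$, a Borel--Cantelli argument showing infinitely many blocks are hit while the tail set $\bigcap_k R_k$ is avoided almost surely, and a final de-randomization by independence/Fubini. Your only deviations are cosmetic: you bound $\Pbb(B_k^c\mid\Xbb)$ directly via the disjoint $\eta_k$-neighbourhoods of the sample values instead of the paper's negative-correlation computation, and you absorb the tail set into $A_0$ rather than keeping it as a separate cell $A_{-1}$.
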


\begin{proof}
The inclusion $\fs \subset \fmv$ is a direct observation, therefore we focus on proving that $\fmv \subset \fs$. Let $\mathbb X$ be a stochastic process which does not satisfy $\fs$. The goal is to construct a countable measurable partition $\{A_k\}_{k=1}^{\infty}$ of $\Xcal$ which disproves condition $\fmv$, i.e. such that $\{k\in \mathbb N: A_k\cap \mathbb{X} \neq \emptyset \}$ is infinite with nonzero probability. Denote by $\Acal$ the event that $\Xbb$ takes an infinite number of values, i.e. $\mathcal{A} = \{\#\Xbb=+\infty\}$. We have assumed that $\Pbb(\mathcal A)>0$ and will condition on $\Acal$ for the rest of the proof. For $k\in \Nbb$, define $N_k\in \mathbb N$ such that,
\begin{equation*}
    \mathbb P\left( \left. \#\Xbb_{\leq N_k} \geq \frac{1}{\mu_k^2} \; \right| \mathcal A\right)\geq 1-\frac{1}{2^{k+1}}, \quad \text{where}\quad \mu_k := \frac{1}{2^{k+1}}.
\end{equation*}
Note that $N_k$ is a deterministic quantity that only depends on the process $\Xbb$. It is well defined because $ \mathbb P\left(\left.\#\Xbb_{\leq N} \geq \frac{1}{\mu_k^2} \;\right| \mathcal A\right)\rightarrow_{N\rightarrow \infty} 1$ since $\Xbb$ takes an infinite number of values in $\Acal$. Now also define $0<\delta_k<\frac{1}{2^{k+1}}$ satisfying:
\begin{equation*}
    \mathbb P\left( \left. \min_{1\leq i, j \leq N_k,X_i\neq X_j} |X_i-X_j| > \delta_k \;\right| \mathcal A\right)\geq 1 - \frac{1}{2^{k+1}}.
\end{equation*}
Let $\mathcal E_k$ be the intersection of the two events above, we have by union bound: $
    \mathbb P\left( \mathcal E_k \mid \mathcal A\right)\geq 1 - \frac{1}{2^{k}}$, where $\Ecal_k$ can be written as
\begin{equation*}
    \Ecal_k:\quad \#\Xbb_{\leq N_k}>\frac{1}{\mu_k^2}  \quad  \text{and} \quad \forall x\neq y\in \Xbb_{\leq N_k}, ~|x-y|> \delta_k. 
\end{equation*}

\noindent
We are now ready to construct the partition. Let $\bq = (q_i)_{i\geq 1}$ be an i.i.d. sequence of independent uniforms sampled from $\mathcal U([0,1])$. Define $B_0 =[0,1]$ and for $k\geq 1$,
\begin{equation*}
    B_k =  \bigcup_{i_{k-1}< i \leq i_k} \left[ q_i-\frac{\delta_k}{2},q_i+\frac{\delta_k}{2}\right],
\end{equation*}
where $(i_k)_{k\geq 0}$ satisfies $i_0=0$ and $i_k = i_{k-1} +  \lceil \frac{\mu_k}{\delta_k} \rceil$. Note that the Borel measure of $B_k$ is roughly $\mu_k$: $\mu(B_k)\leq \mu_k+\delta_k$. We use the remainders $R_k = [0,1]\cap\bigcup_{l\geq k}B_l$ to define the partition $\{A_k\}_{k= -1}^\infty$ as follows:
\begin{equation*}
    A_k = R_k\setminus R_{k+1},\quad k\geq 0
\end{equation*}
and $A_{-1} = \bigcap_{k\geq 0}R_k$. The sets $\{A_k\}_{k= -1}^\infty$ define a proper partition of $\Xcal$ since $A_{-1}$ contains elements that appear infinitely often in $\{B_k\}_{k\geq 0}$ while for any $k\geq 0$, $A_k$ contains elements that appear for the last time in the sequence $\{B_l\}_{k\geq 0}$ in $B_k$. This covers the whole space $\Xcal$ because by construction $B_0=\Xcal$. The interest of this (random) construction lies in the two following lemmas.

\begin{lemma}
\label{lemma:intersect_B}
For any finite (deterministic) $S \subset [0,1]$ with $\#S>\frac{1}{\mu_k^2}$ and $\forall x\neq y \in S, |x-y|>\delta_k$,
\begin{equation*}
    \mathbb P(B_k\cap S = \emptyset) \leq e^{-2^{k+1}}.
\end{equation*}
\end{lemma}

\begin{lemma}
\label{lemma:notinA-1}
For any countable (deterministic) $S\subset [0,1]$, $A_{-1}\cap S = \emptyset, \quad (a.s.)$
\end{lemma}

\noindent
We will now show that with probability $\Pbb(\Acal)$, the partition $\{A_k\}_{k= -1}^{\infty}$ disproves the condition $\fmv$, in other terms that $\mathbb X$ visits an infinite number of sets of the partition. Recall that the randomness is now both in terms of the stochastic process $\Xbb$ and the partition generated from $\bq$. We have that,
\begin{equation*}
    \mathbb P(B_k\cap\Xbb = \emptyset \mid \mathcal A)\leq (1-\Pbb(\Ecal_k|\Acal)) + \mathbb P(B_k \cap \Xbb_{\leq N_k} = \emptyset \mid  \mathcal E_k,\mathcal A) \leq  \frac{1}{2^k}+e^{-2^{k+1}},
\end{equation*}
where in the last inequality we applied Lemma \ref{lemma:intersect_B} to the set $\Xbb_{\leq N_k}$ which has cardinality at least $\frac{1}{\mu_k^2}$ in $\Ecal_k$. We can now apply the first Borel-Cantelli lemma to the sequence of events $\{B_k\cap \Xbb=\emptyset\}$ conditionally on $\Acal$, which shows that almost surely only a finite number of these events are satisfied. Hence, conditionally on $\Acal$, there exists almost surely $\kappa\in \Nbb$ such that for every $k\geq \kappa$, the sequence $\Xbb$ visits $B_k$. Further, by Lemma \ref{lemma:notinA-1}, with probability $1$, $\mathbb X$ does not visit $A_{-1}$. Therefore, conditionnally on $\Acal$, the sequence almost surely visits an infinite number of sets of the partition $\{A_k\}_{k= -1}^\infty$. In summary,
$$\mathbb P_{\bq, \mathbb X}(\# \{k\in \mathbb N: A_k\cap \mathbb{X} \neq \emptyset \} = +\infty)\geq \Pbb(\Acal).$$
Thus, there exists a \emph{deterministic} choice of $\bq$ yielding a partition $\{A_k\}_{k=-1}^\infty$ such that:
\begin{equation*}
    \mathbb P_{\mathbb X}(\# \{k\in \mathbb N, A_k\cap \mathbb{X} \neq \emptyset \} = +\infty)\geq \Pbb(\Acal)>0.
\end{equation*}
This shows the claim of the theorem.
\end{proof}

\begin{proof}[of Lemma~\ref{lemma:intersect_B}]
Note that the randomness of $\mathbb{X}$ does not intervene in this lemma. The probability law $\Pbb$ only accounts for the randomness of the partition through the variables $\bq = (q_i)_{i\geq 1}$. We enumerate $S = \{x_1, ..., x_{T}\}$ where $T = \#S \geq \frac{1}{\mu_k^2}$.
\begin{equation*}
    \mathbb P(B_k\cap S = \emptyset) = \mathbb P(x_1\notin B_k) \prod_{t=2}^{T} \mathbb P(x_t\notin B_k |x_1,\ldots,x_{t-1}\notin B_k).
\end{equation*}
For the sake of simplicity we will use the notation $B(x,\delta) = [x-\delta,x+\delta]$. Note that events $\{x_i\notin B_k\}$ are negatively correlated. Indeed,
\begin{align*}
    \mathbb P(x_t\notin B_k \mid x_1,\ldots,x_{t-1}\notin B_k) &= \prod_{i=i_{k-1}+1}^{i_k} \mathbb P\left[q_i\notin B\left(x_t,\frac{\delta_k}{2}\right) \left| q_i\notin \bigcup_{1\leq l\leq t-1} B\left(x_l,\frac{\delta_k}{2}\right)\right.\right]\\
    &=\prod_{i=i_{k-1}+1}^{i_k} \mathbb P\left[\tilde q_i\notin B\left(x_t,\frac{\delta_k}{2}\right)\right]
\end{align*}
where $\tilde q_i \sim \mathcal U(J)$ with $J:=\Xcal \setminus \bigcup_{1\leq l\leq t-1} B(x_l,\frac{\delta_k}{2})$. Because  $|x_t-x_l|>\delta_k$ for all $1\leq l\leq t-1$, we have
$B(x_t, \frac{\delta_k}{2}) \subset J$. Thus,
\begin{equation*}
    \mathbb P(x_t\notin B_k |x_1,\ldots,x_{t-1}\notin B_k) = \prod_{i=i_{k-1}+1}^{i_k} \left(1-\frac{\delta_k}{\mu(J)} \right)\leq \prod_{i=i_{k-1}+1}^{i_k} (1-\delta_k) = \mathbb P(x_t\notin B_k).
\end{equation*}
Using the negative correlation, we have that
\begin{equation*}
    \mathbb P(B_k\cap S = \emptyset) \leq \prod_{t=1}^T\mathbb P(x_1\notin B_k) = (1-\delta_k)^{T(i_k-i_{k-1})} \leq (1-\delta_k)^{\frac{1}{\mu_k\delta_k}}\leq e^{-\frac{1}{\mu_k}} = e^{-2^{k+1}}.
\end{equation*}
This ends the proof of the lemma.
\end{proof}

\begin{proof}[of Lemma~\ref{lemma:notinA-1}]
We start by proving that for a given $x\in \Rbb$, $x \notin  A_{-1}$ a.s. For $k\geq 1$ we have,
\begin{equation*}
    \mathbb P(x\in B_k) \leq \left\lceil\frac{\mu_k}{\delta_k}\right\rceil \delta_k \leq\left(\frac{\mu_k}{\delta_k}+1\right) \delta_k  \leq \mu_k + \delta_k \leq \frac{1}{2^k}.
\end{equation*}
Therefore, $\mathbb P(x\in R_k)\leq \frac{1}{2^{k-1}}$. This shows that $\mathbb P(x\in A_{-1})\leq \mathbb P(x\in \bigcap_k R_k) = 0.$ Taking the union over all countable random variables in $S$, we have $\mathbb P(A_{-1}\cap S\neq \emptyset)=0$.
\end{proof}

\paragraph{Extension to all standard Borel spaces.} Before we move on to proving the main theorem in the most general framework of separable metric spaces, observe that the proof for $\Xcal = [0,1]$ easily extends to all standard Borel space by Kuratowski's theorem, in particular for instance to $\Xcal = \Rbb^d$. Kuratowski's theorem states that if $\Xcal$ is an uncountable standard Borel space it is isomorphic to $[0,1]$ with the Euclidean distance, meaning that there exists a measurable bijection $f:\Xcal \rightarrow [0,1]$. Let $\Xbb = (X_i)_{i\geq 0}$ be a stochastic process on $\mathcal X$ satisfying $\fmv$. Then, because $f$ is measurable, $\tilde \Xbb := (f(X_i))_{i\geq 0}$ is a stochastic process on $[0,1]$ which satisfies $\fmv$. By Theorem \ref{thm:circle}, $\tilde \Xbb$ satisfies $\fs$. Thus, because $f$ is bijective, $\Xbb$ also satisfies $\fs$.

\renewenvironment{proof}[1][]{\par\noindent{\bf Proof #1\ }}{\hfill\BlackBox\\[2mm]}

\renewcommand{\rm}{\mathrm}

\newcommand{\X}{\mathcal X} 
\newcommand{\Y}{\mathcal Y} 
\newcommand{\F}{\mathcal F} 
\newcommand{\G}{\mathcal G} 
\newcommand{\A}{\mathcal A} 
\renewcommand{\H}{\mathcal H} 
\renewcommand{\L}{\mathcal L} 
\newcommand{\U}{\mathcal U} 
\renewcommand{\P}{\mathbb P} 
\newcommand{\I}{\mathit I} 
\newcommand{\nats}{\mathbb{N}} 
\newcommand{\ints}{\mathbb{Z}} 
\newcommand{\reals}{\mathbb{R}} 
\newcommand{\Epsilon}{\mathscr{E}}
\newcommand{\E}{\mathbb E}
\newcommand{\eps}{\varepsilon}
\newcommand{\diam}{diam}
\newcommand{\argmax}{\mathop{\rm {argmax}}}
\newcommand{\argmin}{\mathop{\rm {argmin}}}
\renewcommand{\limsup}{\mathop{\rm {limsup}}}
\renewcommand{\liminf}{\mathop{\rm {liminf}}}
\newcommand{\Borel}{{\cal B}}
\newcommand{\ProcX}{\mathbb{X}}
\newcommand{\ProcY}{\mathbb{Y}}
\newcommand{\loss}{\ell}
\newcommand{\maxloss}{\bar{\loss}}
\newcommand{\triconst}{c_{\loss}}
\newcommand{\Loss}{\mathcal{L}}
\newcommand{\SUIL}{{\rm {SUIL}}}
\newcommand{\WUIL}{{\rm {WUIL}}}
\newcommand{\SUAL}{{\rm {SUAL}}}
\newcommand{\WUAL}{{\rm {WUAL}}}
\newcommand{\SUOL}{{\rm {SUOL}}}
\newcommand{\WUOL}{{\rm {WUOL}}}
\newcommand{\ProcSet}{\mathcal{C}}
\renewcommand{\metric}{\rho}
\renewcommand{\i}{{\mathrm{i}}}
\renewcommand{\j}{{\mathrm{j}}}

\newcommand{\ignore}[1]{}
\newcommand{\private}[1]{}

\section{Extension to General Separable Metric Spaces}
\label{sec:separable}

The original proof of $\suol = \fmv$ by \citep*{hanneke2021learning} 
holds for any separable metric space $(\X,\metric)$.  In this 
section, we extend the proof above to hold in this more-general case as 
well, thus completely answering the question (Open Problem 4) posed by \citep*{hanneke2021learning} 
in full generality, and completing the proof of Theorems~\ref{thm:caracterization_suol} and \ref{thm:memorization_opt_universal}.  For the remainder of this section, we let $(\X,\metric)$ denote 
a non-empty separable metric space, and we take as the set $\Borel$ of 
measurable subsets of $\X$ the Borel $\sigma$-algebra generated by 
the topology induced by $\metric$.\\

\noindent \textbf{Theorem~\ref{thm:fs=fmv} (Restated)}~~
For any separable metric space $(\X,\metric)$, 
$\normalfont \fmv = \fs$.\\

\noindent
The main components of the proof are analogous to those for standard Borel spaces, 
with a few important changes: most importantly, the following lemma.

\begin{lemma}
\label{lem:effectively-totally-bounded}
For any $\ProcX$ satisfying condition ${\normalfont \fmv}$, for any $\delta,\eps > 0$ and $m_0 \in \nats$, 
there exists $M_{\eps,\delta} \in \nats$ with $M_{\eps,\delta} \geq m_0$, and a sequence 
$\G^{\eps,\delta} = \{ G^{\eps,\delta}_{1},\ldots,G^{\eps,\delta}_{M_{\eps,\delta}} \}$ in $\Borel$ 
such that every distinct $i,j \in \{1,\ldots,M_{\eps,\delta}\}$ satisfy $G^{\eps,\delta}_{i} \cap G^{\eps,\delta}_{j} = \emptyset$, 
and every $i \in \{1,\ldots,M_{\eps,\delta}\}$ satisfies $\sup_{x,x' \in G^{\eps,\delta}_{i}} \metric(x,x') \leq \delta$, 
and such that
\begin{equation*}
\P\!\left( \ProcX \cap \left( \X \setminus \bigcup_{i=1}^{M_{\eps,\delta}} G^{\eps,\delta}_{i} \right) \neq \emptyset \right) < \eps.
\end{equation*}
In other words, $\G^{\eps,\delta}$ is a sequence of disjoint measurable sets 
of diameter at most $\delta$, 
which cover all of the points in $\ProcX$ with probability $1-\eps$.
\end{lemma}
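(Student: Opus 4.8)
The plan is to reduce to a countable measurable partition of $\X$ and then exploit condition $\fmv$. First I would fix $\delta, \eps > 0$ and $m_0 \in \nats$. Using separability of $(\X,\metric)$, pick a countable dense subset $\{z_1, z_2, \ldots\}$ and consider the balls $B_n = B(z_n, \delta/2)$, which cover $\X$ and each have diameter at most $\delta$. Disjointify them in the usual way: set $\tilde B_n = B_n \setminus \bigcup_{m < n} B_m$. These are Borel, pairwise disjoint, have diameter at most $\delta$, and cover $\X$; so $\{\tilde B_n\}_{n \geq 1}$ is a countable measurable partition of $\X$ into sets of diameter $\leq \delta$ (discarding empty ones). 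Apply condition $\fmv$ to this partition: $\#\{n : \tilde B_n \cap \ProcX \neq \emptyset\} < \infty$ almost surely. Hence $\P(\ProcX \cap \bigcup_{n > N} \tilde B_n \neq \emptyset) \to 0$ as $N \to \infty$, so there is some $N$ with this probability below $\eps$; take $M_{\eps,\delta} = \max\{N, m_0\}$ and $G^{\eps,\delta}_i = \tilde B_i$ for $i \leq M_{\eps,\delta}$ (padding with empty sets if $N < m_0$, which preserves disjointness and the diameter bound). Then $\X \setminus \bigcup_{i=1}^{M_{\eps,\delta}} G^{\eps,\delta}_i \subseteq \bigcup_{n > N} \tilde B_n$, so the tail probability is below $\eps$, as required.

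The one technical point to be careful about is measurability of the disjointified sets: open balls are Borel and countable set operations preserve Borel measurability, so each $\tilde B_n \in \Borel$ — this is routine. A second routine point is that the countability of the cover is exactly what lets us invoke $\fmv$, which quantifies over \emph{countable} measurable partitions; the separability of $\X$ is precisely what guarantees a countable $\delta$-cover exists. The padding with empty sets is harmless since $\sup_{x,x' \in \emptyset}\metric(x,x') = 0 \leq \delta$ vacuously and $\emptyset$ is disjoint from everything.

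There is no real obstacle here; the lemma is essentially a direct corollary of $\fmv$ once one observes that a separable metric space admits countable measurable partitions into arbitrarily small-diameter pieces. The mild subtlety — and the reason this lemma is stated separately in the general-space argument rather than reusing the $[0,1]$ construction — is that in a general separable metric space one cannot rely on an explicit interval-based construction, but separability still delivers the needed countable small-diameter cover, after which $\fmv$ does all the work. I would expect the rest of the extension of Theorem~\ref{thm:fs=fmv} to general separable metric spaces to use this lemma as the substitute for the role played by the grid of $\delta_k$-intervals in the $[0,1]$ proof, feeding these $\G^{\eps,\delta}$ families (with appropriately chosen $\eps$, $\delta$, $m_0$ along a sequence $k \to \infty$) into a random-subcover construction analogous to the $\{B_k\}$ and $\{A_k\}$ above.
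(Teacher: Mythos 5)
Your proposal is correct and follows essentially the same route as the paper: disjointify $\delta/2$-balls around a countable dense subset into a countable Borel partition of diameter-$\leq\delta$ cells, invoke $\fmv$ to conclude that only finitely many cells are visited almost surely, and use monotone convergence of the tail event probabilities to pick a finite $M_{\eps,\delta}\geq m_0$ with tail probability below $\eps$. The only (immaterial) differences are cosmetic: the paper uses closed rather than open balls and simply chooses $M_{\eps,\delta}$ large enough rather than padding with empty sets.
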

\begin{proof}
Let $\tilde{\X} \subseteq \X$ be a countable dense subset: that is, $\sup_{x \in \X} \inf_{\tilde{x} \in \tilde{\X}} \metric(\tilde{x},x) = 0$.
Enumerate $\tilde{\X}$ as $\{\tilde{x}_1,\tilde{x}_2,\ldots,\}$.
Let $G^{\eps,\delta}_{1} = \{ x : \metric(x,\tilde{x}_1) \leq \delta/2 \}$, 
and for integers $k \geq 2$ inductively define 
$G^{\eps,\delta}_{k} = \{ x : \metric(x,\tilde{x}_{k}) \leq \delta/2 \} \setminus \bigcup_{k' = 1}^{k-1} G^{\eps,\delta}_{k'}$.
In particular, this collection $\{ G^{\eps,\delta}_{k} : k \in \nats \}$ forms a countable partition of $\X$ 
into measurable subsets of diameter at most $\delta$ (by the triangle inequality).
Now let $\ProcX$ be any process satisfying $\fmv$.
It remains only to show there exists a finite $M_{\eps,\delta} \in \nats$ satisfying the claim.
Let $\hat{M} = \max\{ k : \ProcX \cap G^{\eps,\delta}_{k} \neq \emptyset \}$, or $\hat{M} = \infty$ if there is no maximum.
By hypothesis, $\P( \hat{M} < \infty ) = 1$.
Since the event $\{ \hat{M} > M \}$ is non-increasing in $M$, 
$\lim_{M \to \infty} \P( \hat{M} > M ) = \P( \hat{M} = \infty ) = 0$.  
Thus, $\exists M_{\eps,\delta} \in \nats$ with $M_{\eps,\delta} \geq m_0$ such that 
$\P(\hat{M} > M_{\eps,\delta}) < \eps$.
In other words, $\P( \exists k > M_{\eps,\delta} : \ProcX \cap G^{\eps,\delta}_{k} \neq \emptyset ) < \eps$.
Since $\{ G^{\eps,\delta}_{k} : k \in \nats \}$ is a partition of $\X$, 
this implies the claim in the lemma.
\end{proof}

We are now ready for the main proof.\\

\begin{proof}[of Theorem~\ref{thm:fs=fmv}]
Since condition $\fs$ clearly implies condition $\fmv$\comment{(via the equivalence in Lemma 49)}, 
we focus on showing $\fmv\subset\fs$.
Let $\ProcX$ be any process satisfying condition $\fmv$, 
and for the sake of obtaining a contradiction, 
suppose that condition $\fs$ fails: 
that is, there is an event $\A$ with $\P(\A) > 0$, 
on which $\#\{ x \in \X : \ProcX \cap \{x\} \neq \emptyset\} = \infty$.

For each $k \in \nats$, let $N_k \in \nats$ be such that 
\begin{equation*}
\P\!\left( \#\ProcX_{\leq N_k} \geq 2^{2k+2} \middle| \A \right) \geq 1-\frac{1}{2^{k+2}},
\end{equation*}
and let $\delta_{k} > 0$ be such that 
\begin{equation*}
\P\!\left( \min_{i,j \leq N_k : X_i \neq X_j} \metric(X_i,X_j) > \delta_k \middle| \A \right) \geq 1 - \frac{1}{2^{k+3}}.
\end{equation*}
Let $S_k = \{ x \in \X : \ProcX_{\leq N_k} \cap \{x\} \neq \emptyset \}$
and let $\eps_k = \frac{1}{2^{k+3}}$.
Let $\G^{\eps_k,\delta_k}$ and $M_{\eps_k,\delta_k}$ be as in Lemma~\ref{lem:effectively-totally-bounded}, 
with $m_0 = 2^{k+2}$.

Let $\Epsilon_k$ denote the event that  
$\#\ProcX_{\leq N_k} \geq 2^{2k+2}$, 
$\min_{i,j \leq N_k : X_i \neq X_j} \metric(X_i,X_j) > \delta_k$, 
and 
$\ProcX \cap \left( \X \setminus \bigcup \G^{\eps_k,\delta_k} \right) = \emptyset$ 
all hold simultaneously.  In particular, by the union bound, $\P( \Epsilon_k | \A ) \geq 1 - 2^{-k-1}$.

For each $k \in \nats$,
let $b_k = \left\lceil 2^{-k-2} M_{\eps_k,\delta_k} \right\rceil$, 
and let $Q_1^k,\ldots,Q_{b_k}^k$ be independent uniform samples from $\G^{\eps_k,\delta_k}$ (also independent across $k$ and independent from $\ProcX$).
Then let $B_k = \bigcup_{i=1}^{b_k} Q_i^k$.
For each $k \in \nats$, 
let $R_k = \bigcup_{\ell \geq k} B_{\ell}$.
Also let $A_{-1} = \bigcap_{k \in \nats} R_k$ 
and for each $k \in \nats$, let $A_k = R_k \setminus R_{k+1}$,
and $A_0 = \X \setminus R_1$.
We will show that (with non-zero probability) 
the countable measurable partition 
$\{A_k : k \in \nats \cup \{-1,0\}\}$ 
violates the condition $\fmv$, thus obtaining a contradiction.

Now note that, on the event $\Epsilon_k$, 
every $x \in S_k$ is in a distinct set $G^{\eps_k,\delta_k}_{i} \in \G^{\eps_k,\delta_k}$: 
that is, by definition of $\Epsilon_k$, every $x \in S_k$ is in some $G^{\eps_k,\delta_k}_{i} \in \G^{\eps_k,\delta_k}$, 
and since each $G^{\eps_k,\delta_k}_{i}$ has diameter at most $\delta_k$, while 
every distinct $x,x' \in S_k$ are $\delta_k$-separated (on event $\Epsilon_k$),
no two elements of $S_k$ can be in the same $G^{\eps_k,\delta_k}_{i}$.
Therefore, on the event $\Epsilon_k$ we have that 
\begin{align*}
\P\!\left( B_k \cap S_k = \emptyset \middle| \ProcX \right) 
= \P\!\left( Q_1^k \cap S_k = \emptyset \middle| \ProcX \right)^{b_k} 
= \left( 1 - \frac{|S_k|}{M_{\eps_k,\delta_k}} \right)^{b_k}
\leq e^{- |S_k| b_k / M_{\eps_k,\delta_k}}
\leq e^{- 2^{k} },
\end{align*}
where the last inequality is based on the definition of $b_k$ and 
the fact that $|S_k| \geq 2^{2k+2}$ on the event $\Epsilon_k$.
Thus, on the event $\bigcap_{k \in \nats} \Epsilon_k$, 
\begin{equation*}
\sum_{k=1}^{\infty} \P\!\left( B_k \cap S_k = \emptyset \middle| \ProcX \right) 
\leq \sum_{k=1}^{\infty} e^{-2^k} < \infty.
\end{equation*}
By the Borel-Cantelli lemma, this implies that there is an event $\Epsilon^{\prime}$ 
of probability one, such that on $\Epsilon^{\prime} \cap \bigcap_{k \in \nats} \Epsilon_k$, 
there exists $\kappa \in \nats$ such that 
every $k \geq \kappa$ satisfies  
$B_k \cap S_k \neq \emptyset$, 
and hence also $\ProcX \cap R_k \neq \emptyset$.
Now, if $\ProcX \cap A_{-1} = \emptyset$, this would further imply 
that $|\{ k \in \nats : \ProcX \cap A_k \neq \emptyset \}| = \infty$.

We next turn to showing that $\ProcX \cap A_{-1} = \emptyset$ (a.s.).
For any $t,k \in \nats$, by the union bound, 
$\P(X_t \in B_k) \leq \frac{b_k}{M_{\eps_k,\delta_k}} \leq 2^{-k-1}$ 
(recalling that $M_{\eps_k,\delta_k} \geq 2^{k+2}$, so that $b_k \leq 2^{-k-1} M_{\eps_k,\delta_k}$).
By the union bound, this further implies any $t,k \in \nats$ satisfy 
$\P(X_t \in R_k) \leq \sum_{\ell \geq k} \P(X_t \in B_{\ell}) \leq \sum_{\ell \geq k} 2^{-\ell-1} = 2^{-k}$.
Thus, $\P(X_t \in A_{-1}) = \P\!\left(X_t \in \bigcap_{k \in \nats} R_k\right) \leq \lim_{k \to \infty} \P(X_t \in R_k) = 0$.
By the union bound, $\P( \ProcX \cap A_{-1} \neq \emptyset ) = 0$.
Thus, there is an event $\Epsilon^{\prime\prime}$ of probability one, on which $\ProcX \cap A_{-1} = \emptyset$.

Altogether, we have that on the event $\Epsilon^{\prime} \cap \Epsilon^{\prime\prime} \cap \bigcap_{k \in \nats} \Epsilon_k$, 
$|\{ k \in \nats : \ProcX \cap A_k \neq \emptyset \}| = \infty$.
Since $\P(\Epsilon^{\prime}) = \P(\Epsilon^{\prime\prime}) = 1$, 
and 
\begin{align*}
\P\!\left( \bigcap_{k \in \nats} \Epsilon_k \right) 
& \geq \P\!\left(  \A \cap \bigcap_{k \in \nats} \Epsilon_k \right) 
\geq \P(\A) - \sum_{k \in \nats} \P(\A) \left( 1 - \P( \Epsilon_k | \A ) \right)
\\ & \geq \P(\A) - \sum_{k \in \nats} \P(\A) 2^{-k-1}
= \frac{1}{2}\P(\A),
\end{align*}
by the union bound we have 
$\P\!\left( \Epsilon^{\prime} \cap \Epsilon^{\prime\prime} \cap \bigcap_{k \in \nats} \Epsilon_k \right) \geq \frac{1}{2} \P(\A) > 0$.
In particular, this implies 
$\P\!\left( |\{ k \in \nats : \ProcX \cap A_k \neq \emptyset \}| = \infty \right) > 0$.
Moreover, by the law of total probability, 
\begin{equation*}
\P\!\left( |\{ k \in \nats : \ProcX \cap A_k \neq \emptyset \}| = \infty \right) 
= \E\!\left[ \P\!\left( |\{ k \in \nats : \ProcX \cap A_k \neq \emptyset \}| = \infty \Big| \{A_k : k \in \nats\} \right) \right],
\end{equation*}
and hence (since $\ProcX$ is independent of the random partition $\{A_k : k \in \nats \cup \{-1,0\}\}$), 
there exists a \emph{deterministic} choice of a partition $\{ \hat{A}_k : k \in \nats \cup \{-1,0\} \}$ such that 
\begin{equation*}
\P\!\left( |\{ k \in \nats \cup \{-1,0\} : \ProcX \cap \hat{A}_k \neq \emptyset \}| = \infty \right) > 0,
\end{equation*}
contradicting \comment{Lemma 49}condition $\fmv$.  This completes the proof.
\end{proof}



\section{Consequences on inductive and self-adaptive learning}\label{section:consequences}
Along with optimistically universal \textit{online} learning, \cite{hanneke2021learning} identifies two other learning setups, namely \emph{inductive learning} and \emph{self-adaptive learning}.

\paragraph{Inductive learning.} An inductive learning rule $\{f_t\}_{t=1}^{\infty}$ is a sequence of measurable functions $f_t: \Xcal^{t-1}\times \Ycal^{t-1}\times \Xcal \rightarrow \Ycal$ such that given training data $(\Xbb_{<t},\Ybb_{<t})$ and input point $X_{t'}$ with $t'>t$ outputs prediction $f_t(\Xbb_{<t},\Ybb_{<t},X_{t'}).$ Its performance is measured in terms of,
\begin{equation*}
    \mathcal{L}_{\Xbb}(f_{t},f^*;t) = \limsup_{T\rightarrow\infty} \frac{1}{T}\sum_{t'=t}^{t+T}\ell(f_t(\Xbb_{<t},\Ybb_{<t},X_{t'}), f^*(X_{t'})).
\end{equation*}
Let $\suil$ denote the set of all processes $\Xbb$ that admit strong universal inductive learning: i.e., for which there exists an inductive learning rule $\{f_t\}$ such that for every measurable $f^* : \Xcal \to \Ycal$, $\mathcal{L}_{\Xbb}(f_{t},f^*;t) \to 0~~(a.s.)$. Note that the difference between an online learning rule and its inductive counterpart is that the latter will be fixed for an infinite horizon. It was therefore shown by \cite{hanneke2021learning} that $\suil\subset \suol$.

\paragraph{Self adaptive learning rule.} A self-adaptive learning rule $\{f_{t_1,t_2}\}_{t_1\leq t_2}^{\infty}$ is a sequence of measurable functions $f_{t_1,t_2}: \Xcal^{t_2-1}\times \Ycal^{t_1-1}\times \Xcal \rightarrow \Ycal$ such that given training data $(\Xbb_{<t_2},\Ybb_{<t_1})$ and input point $X_{t_2}$ it performs prediction $f_{t_1,t_2}(\Xbb_{<t_2},\Ybb_{<t_1},X_{t_2}).$ Its performance is measured in terms of
\begin{equation*}
    \mathcal{L}_{\Xbb}(f_{t_1, \cdot},f^*;t_1) = \limsup_{T\rightarrow\infty} \frac{1}{T}\sum_{t_2=t_1}^{t_1+T}\ell(f_{t_1,t_2}(\Xbb_{<t_2},\Ybb_{<t_1},X_{t_2}), f^*(X_{t_2})).
\end{equation*}
Let $\sual$ denote the set of all processes $\Xbb$ that admit strong universal self-adaptive learning: i.e., for which there exists a self-adaptive learning rule $\{f_{t_1,t_2}\}$ such that for every measurable $f^* : \Xcal \to \Ycal$, $\mathcal{L}_{\Xbb}(f_{t_1,\cdot},f^*;t_1) \to 0~~(a.s.)$. Note that self-adaptive learning rules are more expressive than inductive learning rules for they have access to additional unlabeled data, like in the semi-supervised learning setup studied in the literature \cite{chapelle2009semi}, yet are still less powerful than online learning rules which would also have access to the respective labels. It was therefore shown by \cite{hanneke2021learning} that $\suil\subset \sual \subset\suol$.

\paragraph{Consequence of the Main Theorem.} For unbounded losses, \cite{hanneke2021learning} shows that $\suil = \sual = \suol$. However, once again this proof relied on the aforementioned complicated arguments.  But in light of our proof that $\suol = \fs$, it becomes immediately apparent that $\suil=\sual=\suol$ and that these classes all admit memorization as an optimistically universal learning rule (merely noting that $\fs \subset \suil$, since for any $\Xbb \in \fs$, the inductive loss of the memorization rule $f_t$ becomes zero once $t$ exceeds the index of the last novel data point). This greatly simplifies the proof of these equivalences compared to the original proof of \cite{hanneke2021learning}. Note that while these three setups turn out to be equivalent when the loss is unbounded, interesting distinctions do exist in the bounded case for which \cite{hanneke2021learning} proved that there exists an optimistically universal self-adaptive learning rule (which surprisingly is necessarily different from nearest-neighbour), but no optimistically universal inductive learning rule. 

\section{Discussion on noise : an optimistically universal Bayes consistent learner}\label{sec:bayesian}
For simplicity we restricted the analysis to the \emph{realizable} setting \cite{littlestone1988learning,ben2009agnostic} where there exists a measurable function $f^*$ satisfying $\forall t\geq 1: Y_t = f^*(X_t)$. A common variant allows for the function $f^*$ to be \emph{noisy}, i.e. to take  the form of a conditional probability density $p_{Y|X}$. In this section, we will generalise the main results to this context. We start by recalling the adequate notion of consistency, e.g. \cite{hanneke2021learning,hanneke2020universal,cohen2022metric}. We say that the learning rule $f_{\cdot}$ is strongly universally Bayes consistent under $\Xbb$ if for all conditional probability distribution $p_{Y|X}$ and any measurable function $\fb$, almost surely 
\begin{equation*}
    \mathcal{L}_{\Xbb}(f_{\cdot},p_{Y|X};T,\fb) \leq 0 
\end{equation*}
where $\hat{\mathcal{L}}_{\Xbb}(f_{\cdot},p_{Y|X};T,\fb)$ is the excess loss of the learning rule $f_{\cdot}$ against the constant predictor $\bar{f}$,
\begin{equation*}
    \mathcal{L}_{\Xbb}(f_{\cdot},p_{Y|X};T,\fb) = \limsup_{T\rightarrow\infty}\frac{1}{T}\sum_{t=1}^T\left(\ell(f_t(\Xbb_{<t},\Ybb_{<t},X_t),Y_t)-\ell(\bar{f}(X_t),Y_t)\right).
\end{equation*}
\begin{theorem}
\label{thm:bayes_optimistic}
If $(\Ycal, d)$ is a separable locally compact metric space with $\bar{d}=\infty$, there exists an optimistically universal Bayes consistent learning rule for the loss $\ell=d^p$, for any $p\geq 1$. 
\end{theorem}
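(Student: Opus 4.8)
The plan is to reduce the noisy problem to the realizable one plus a classical i.i.d.\ estimation problem. First, condition $\fs$ remains \emph{necessary}: a deterministic target $f^*$ is the special case $p_{Y|X}(\cdot\mid x)=\delta_{f^*(x)}$, and taking the comparator $\fb=f^*$ turns Bayes consistency into ordinary strong universal consistency, so any $\Xbb$ admitting a universally Bayes consistent rule lies in $\suol=\fs$ by Theorem~\ref{thm:caracterization_suol}. The whole content is therefore the \emph{sufficiency} direction: exhibit one rule that is strongly universally Bayes consistent under every $\Xbb\in\fs$, which (by non-testability of $\fs$, exactly as in the realizable case) is then optimistically universal.

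For sufficiency, fix $\Xbb\in\fs$ and work on the probability-one event $\{\#\Xbb<\infty\}$. Conditionally on $\Xbb$, the labels seen at a fixed instance value $x$ form an i.i.d.\ sequence from $\nu_x:=p_{Y|X}(\cdot\mid x)$, and the label sub-sequences attached to distinct values of $\Xbb$ are mutually independent. Thus the online task splits into finitely many independent i.i.d.\ problems on $(\Ycal,d)$, and it suffices to build, for a fixed $\nu$ and i.i.d.\ $Y_1,Y_2,\ldots\sim\nu$, an online predictor $\hat c_j=\hat c_j(Y_1,\ldots,Y_{j-1})$ with
\[
\limsup_{n\to\infty}\frac1n\sum_{j=1}^{n}\Bigl(d(\hat c_j,Y_j)^p-d(c,Y_j)^p\Bigr)\le 0\qquad\text{for every }c\in\Ycal\ \ (a.s.),
\]
and then run an independent copy of this predictor on the label stream at each value of $\Xbb$. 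Indeed, given such a predictor and a comparator $\fb$, one splits $\Lcal_{\Xbb}(f_{\cdot},p_{Y|X};T,\fb)$ into the finitely many blocks indexed by the values $x$ of $\Xbb$; on each block the displayed inequality with $c=\fb(x)$, combined with the strong law of large numbers applied to $\frac1n\sum d(\fb(x),Y_j)^p$ when this mean is finite (and a trivial domination when it is infinite), yields a nonpositive limsup, and summing the finitely many blocks gives $\Lcal_{\Xbb}(f_{\cdot},p_{Y|X};T,\fb)\le 0$ a.s.

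To construct the i.i.d.\ predictor I would use a regularized empirical-risk-minimization scheme exploiting that a separable locally compact metric space is $\sigma$-compact: fix an exhaustion $\Ycal=\bigcup_m K_m$ by increasing compacts, a reference point $z_0\in K_1$, and let $\hat c_j$ minimize over $c\in K_{m(j)}$ the penalized empirical risk $\frac1{j-1}\sum_{l<j}d(c,Y_l)^p+\lambda_j\,d(c,z_0)^p$, with $m(j)\uparrow\infty$ and $\lambda_j\downarrow 0$ slowly (both devices keep the minimizer from escaping to infinity before the empirical risk is informative). The analytic inputs are: the elementary bound $|d(c,y)^p-d(c',y)^p|\le p\,(d(c,y)+d(c,c'))^{p-1}d(c,c')$ for $p\ge1$, making $c\mapsto d(c,y)^p$ locally Lipschitz with controllable modulus; the relaxed triangle inequality $d(c,y)^p\le 2^{p-1}(d(c,c')^p+d(c',y)^p)$, which forces the population risk $g(c):=\Ebb_{\nu}[d(c,Y)^p]$ to be either identically $+\infty$ or finite everywhere, and in the latter case continuous and coercive; and the strong law of large numbers. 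When $g\not\equiv\infty$ these give $\frac1n\sum d(\hat c_j,Y_j)^p\to\inf_c g(c)$ a.s., which implies the displayed inequality. When $g\equiv\infty$ the inequality becomes a statement about the \emph{generalized Fr\'echet $p$-median} of $\nu$: the increments $d(c,Y)^p-d(c',Y)^p$ still have a finite mean on the relevant scales, and one shows the regularized ERM converges to a minimizer of this finite directional-risk functional, so the comparison against every fixed $c$ has nonpositive limit.

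I expect the main obstacle to be precisely this last case: when the loss is unbounded and the Bayes risk is infinite for \emph{every} predictor, no comparator is integrable and one cannot invoke convergence of empirical risks, so everything must be phrased in terms of bounded \emph{differences} of losses (using $p\ge1$ and the compact exhaustion to keep the $d(\cdot,z_0)$-scales finite), showing the regularized ERM is asymptotically at least as good as each fixed $c$ and letting the regularization and compact restriction relax slowly enough that the limiting comparator class is all of $\Ycal$. An alternative, if one prefers to keep the proof short, is to cite the existence of a strongly universally Bayes consistent rule for i.i.d.\ processes on separable locally compact metric spaces with loss $d^p$ from the metric Bayes-consistency literature and plug it into the block decomposition above; the genuinely new ingredient here is only the reduction through $\fs$, which is what upgrades consistency to \emph{optimistic} universality.
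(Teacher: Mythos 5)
Your skeleton matches the paper's: necessity by specializing to $p_{Y|X}(\cdot\mid x)=\delta_{f^*(x)}$ with comparator $\fb=f^*$, so any process admitting a universally Bayes consistent rule lies in $\suol=\fs$ (the appeal to non-testability is a non sequitur here, though harmless -- all that is needed is that it suffices to be Bayes consistent under every $\Xbb\in\fs$); sufficiency by splitting the excess loss over the a.s.\ finitely many distinct values of $\Xbb$ and treating each value's label stream as an i.i.d.\ sequence to which a per-point online estimator is applied. Where you differ is that estimator: the paper's rule memorizes and predicts the Fr\'echet sample mean of the labels already seen at the current point (well defined because closed balls are compact, with $d^p(\hat y_n,y_0)\le \frac{2^p}{n}\sum_i d^p(y_0,Y_i)$), and closes the finite-risk case via the adapted strong-consistency result for Fr\'echet sample means (Theorem~\ref{thm:evans}), whereas you propose a regularized ERM over a compact exhaustion. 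For the case where some $y_0$ has $\Ebb_{Y|X=x}\,d^p(y_0,Y)<\infty$ your plan is workable and essentially parallel.

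The genuine gap is the regime where $\Ebb_{Y|X=x}\,d^p(y,Y)=\infty$ for every $y$, which you correctly identify as the crux but do not close. Your key claim that the increments $d(c,Y)^p-d(c',Y)^p$ ``still have a finite mean'' is false for $p>1$: one only has $|d(c,y)^p-d(c',y)^p|\le p\,d(c,c')\,(d(c,y)+d(c,c'))^{p-1}$, and $\Ebb\,d(c,Y)^{p-1}$ may be infinite (already for $p=2$, $\Ycal=\Rbb$, $\Ebb|Y|=\infty$ the difference of squared losses has infinite mean), so the ``directional risk'' need not be defined, a generalized Fr\'echet $p$-median need not exist, and the asserted convergence of the regularized ERM to it is unproved. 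Moreover, in this regime your penalization radius, driven by $\frac{1}{j}\sum_{l<j}d(z_0,Y_l)^p$, itself diverges, so nothing in the sketch rules out the prequential loss of $\hat c_j$ outgrowing the comparator's (also diverging) empirical average. The paper dispatches this case differently, by comparing directly against the fixed comparator $\fb(x)$: every $y$ then has infinite conditional risk, so the comparator's empirical average diverges by the strong law for nonnegative variables with infinite mean, and the learner is weighed against the (infinite) minimal risk rather than against a finite-mean ``difference functional.'' Your fallback of citing i.i.d.\ metric Bayes-consistency results does not obviously repair this either, since those results are precisely delicate (or assume moment conditions) in the unbounded-loss, infinite-risk regime; so as written the $p>1$ infinite-conditional-risk case remains open in your argument.
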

Note that the Bayesian setting is more general than the \emph{realizable} setting. Thus, any process $\Xbb$ that admits a universally Bayes consistent learning rule is in $\suol$, and therefore it takes a finite number of values almost surely. To prove Theorem \ref{thm:bayes_optimistic}, it will suffice to define a learning rule that is universally Bayes consistent with any such process. For this purpose, we use a result from \cite{evans2020strong} which we slightly adapt to our setting. The proof of the following theorem can be found in Appendix \ref{appendix:proof_evans}.
\begin{theorem}[\cite{evans2020strong}]\label{thm:evans}
Let $p\geq 1$ and $(\Ycal,d)$ be a separable metric space such that any closed ball is compact. Let $(Y_i)_{i\geq 1}$ be an i.i.d. sequence of random variables in $\Ycal$ of distribution $Y$ satisfying $\Ebb d^p(y_0,\Ycal)<\infty$ for some $y_0\in \Ycal$. Denote $\hat y_n$ a Fr\'echet sample mean of the samples $Y_1,\ldots,Y_n$. Then,
\begin{equation*}
    \frac{1}{n}\sum_{i=1}^n d^p(\hat y_n,Y_i)\to \min_{y\in \Ycal} \Ebb d^p(y,Y)\quad (a.s.).
\end{equation*}
\end{theorem}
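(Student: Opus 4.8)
The plan is to prove strong consistency of the empirical Fr\'echet functional at its minimizer via the classical three-step scheme for $M$-estimators: an immediate upper bound from the pointwise strong law, a \emph{localization} step confining the sample Fr\'echet means to a fixed compact ball, and a \emph{uniform} strong law on that ball to supply the matching lower bound. Throughout write $F_n(y) = \frac1n\sum_{i=1}^n d^p(y,Y_i)$ for the empirical Fr\'echet functional and $F(y) = \Ebb\,d^p(y,Y)$ for its population version, and set $C := \Ebb\,d^p(y_0,Y) < \infty$. Using $|a^p-b^p| \le p\max(a,b)^{p-1}|a-b|$ together with the triangle inequality, one first checks that $F(y)<\infty$ for all $y$, that $F$ is locally Lipschitz, and that $F$ is \emph{coercive}: fixing $K$ with $\Pbb(d(y_0,Y)\le K)\ge\frac34$, any $y$ with $d(y_0,y)=r>K$ has $d(y,Y_i)\ge r-K$ whenever $d(y_0,Y_i)\le K$, hence $F(y)\ge\frac34(r-K)^p$. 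Consequently there is a \emph{deterministic} radius $R$ such that every global minimizer of $F$ lies in $B:=\{y:d(y_0,y)\le R\}$; since closed balls are compact by hypothesis and $F$ is continuous, $\min_{y\in\Ycal}F(y)=\min_{y\in B}F(y)=:F^\star$ is attained, say at $\mu\in B$.

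Next I would establish a uniform strong law on $B$, namely $\sup_{y\in B}|F_n(y)-F(y)|\to0$ a.s. For each fixed $y$ the ordinary SLLN applies to the i.i.d.\ integrable sequence $(d^p(y,Y_i))_i$ and gives $F_n(y)\to F(y)$ a.s.; running this over a countable dense subset of $B$ yields a single probability-one event on which convergence holds at every point of a dense set. The quantitative ingredient is the equicontinuity estimate $|F_n(y)-F_n(y')| \le p\,d(y,y')\cdot\frac1n\sum_{i=1}^n(R+d(y_0,Y_i))^{p-1}$ for $y,y'\in B$, where the right-hand average converges a.s.\ to $\Ebb(R+d(y_0,Y))^{p-1}<\infty$ (finite since $p\ge1$ and $\Ebb\,d^p(y_0,Y)<\infty$) and is therefore a.s.\ bounded uniformly in $n$; the same bound holds for $F$. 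A routine Arzel\`a--Ascoli argument then promotes pointwise convergence on the dense set to uniform convergence on the compact $B$.

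Then comes localization of the sample means. On one hand $F_n(\hat y_n)\le F_n(y_0)\to C$, so a.s.\ $F_n(\hat y_n)\le C+1$ for all large $n$. On the other hand, by the SLLN the fraction of indices $i\le n$ with $d(y_0,Y_i)\le K$ is eventually at least $\frac12$, so any $y$ with $d(y_0,y)=r>K$ has $F_n(y)\ge\frac12(r-K)^p$ for all large $n$; choosing the deterministic $R$ above also large enough that $\frac12(R-K)^p>C+1$ forces $\hat y_n\in B$ a.s.\ for all large $n$. Combining the three steps: for all large $n$, $F_n(\hat y_n)=\min_{y\in B}F_n(y)\ge\min_{y\in B}F(y)-\sup_{y\in B}|F_n-F| = F^\star - o(1)$, while $F_n(\hat y_n)\le F_n(\mu)\to F(\mu)=F^\star$ by the pointwise SLLN at $\mu$; hence $F_n(\hat y_n)\to F^\star=\min_{y\in\Ycal}\Ebb\,d^p(y,Y)$ a.s. The main obstacle is the lower bound, which is exactly the joint content of localization (ruling out escape to infinity, where the power growth of $d^p$ and the mass argument via $K$ are essential) and the uniform strong law on compacts (where the equicontinuity estimate and integrability of $d^{p-1}$ carry the weight); the upper bound is trivial by comparison with $\mu$. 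A minor point to dispatch is measurability of the selection $n\mapsto\hat y_n$, which follows from a measurable selection theorem on the compact ball $B$, or is simply granted by the hypothesis that $\hat y_n$ is a measurable Fr\'echet sample mean.
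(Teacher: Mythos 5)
Your proof is correct, but it follows a genuinely different route from the one in the paper (Appendix \ref{appendix:proof_evans}). The paper leans on two cited ingredients from \cite{evans2020strong}: Varadarajan's theorem gives almost-sure weak convergence of the empirical measures, which via Lemma 2.1 of \cite{evans2020strong} yields, on a single probability-one event, the pointwise convergence $\frac{1}{n}\sum_i d^p(y,Y_i)\to \Ebb d^p(y,Y)$ simultaneously for \emph{all} $y\in\Ycal$; it then localizes $\hat y_n$ through the bound $d^p(\hat y_n,y_0)\le \frac{2^p}{n}\sum_i d^p(y_0,Y_i)$ and finishes by a subsequence-compactness contradiction, using the inequality $(a+b)^p\le(1+\epsilon)a^p+c_\epsilon b^p$ (Lemma 2.3 of \cite{evans2020strong}) both to show that any subsequential limit $y^*$ of $\hat y_n$ is a population Fr\'echet mean and that the empirical values converge to $R^*$ along that subsequence. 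You instead run a self-contained Wald-type $M$-estimation argument: coercivity of $F$ pins all population minimizers in a deterministic compact ball $B$, a mass-fraction argument plus $F_n(\hat y_n)\le F_n(y_0)$ localizes $\hat y_n$ in $B$ eventually, and a uniform strong law on $B$ (pointwise SLLN on a countable dense set upgraded via the equi-Lipschitz estimate with integrable envelope $(R+d(y_0,Y))^{p-1}$) supplies the matching lower bound in the sandwich. What each buys: the paper's argument is shorter because the simultaneous pointwise law and the $(1+\epsilon,c_\epsilon)$ inequality are imported, and the subsequence trick sidesteps any uniformity; your argument avoids all external lemmas (no Varadarajan, no Evans--Jaffe lemmas) at the cost of proving the uniform law, and as a by-product it actually delivers slightly more (uniform convergence of $F_n$ on $B$, from which convergence of $\hat y_n$ to the argmin set would also follow). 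The only points worth making explicit when writing it up are the elementary verifications you already flag: finiteness and local Lipschitzness of $F$, integrability of the envelope $(R+d(y_0,Y))^{p-1}$ when $1\le p$, and that $R$ is chosen large enough to serve both the population localization and the empirical one.
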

In the equation above, we use the notion of Fréchet sample mean that is defined as follows,
\begin{equation*}
    \hat{y}_n \in \argmin_{y\in \Ycal} \frac{1}{n}\sum_{i=1}^n d^p(y,Y_i).
\end{equation*}
Note that the minimum is well defined because the closed balls in the space $(\Ycal,d)$ are compact and $d^p(y,y_0)\leq 2^{p-1} \frac{1}{n}\sum_{i=1}^n (d^p(y,Y_i) + d^p(y_0,Y_i))$, hence $\frac{1}{n}\sum_{i=1}^n d^p(y,Y_i) \geq 2^{1-p}d^p(y,y_0) - \frac{1}{n}\sum_{i=1}^n d^p (y_0,Y_i)$ for any $y_0 \in \Ycal$. Therefore the expression is minimized in the (compact) closed ball of radius at most $2\left[\frac{1}{n}\sum_{i=1}^n d^p (y_0,Y_i)\right]^{1/p}$ around $y_0$ i.e. $d^p (\hat y_n,y_0)\leq \frac{2^p}{n}\sum_{i=1}^n d^p (y_0,Y_i)$. Similarly, the infimum $\inf_{y\in\Ycal}\Ebb d^p(y,Y)$ is attained because for $y_0\in\Ycal$ such that $\Ebb d^p(y_0,Y)<\infty$, the quantity $\Ebb d^p(y,Y)$ is minimized in the compact closed ball around $y_0$ of radius $2[\Ebb d^p (y_0,Y)]^{1/p}$.\\

\noindent
The discussion above allows to define the \emph{Fréchet mean memorizer} learning rule, which is proved to be an optimistically universal Bayes consistent learner in Appendix \ref{appendix:proof_bayes}. Fix an arbitrary $y_0\in\Ycal$,
\begin{equation*}
    f_{t}(\mb x_{< t}, \mb  y_{<t },x_{t}) = \begin{cases}
    y \in \argmin_{y\in \Ycal} \sum_{i=1}^{t-1} \1_{x_i=x_t}\ell(y,y_i), &\text{if }x_t\in \mb x_{<t},\\
    y_0 &\text{if } x_t\not\in \mb x_{<t}.
    \end{cases}
\end{equation*}
Note that the discussion above covers the case of real regression with  $\Ycal=\Rbb$ and $\ell(y_1,y_2) = (y_1-y_2)^2$. In this case, the Frechet sample mean from Theorem \ref{thm:evans} is the empirical average $\hat y_n=\frac{1}{n}\sum_{i=1}^n Y_i$ and the Bayes consistency of the corresponding learning rule can be obtained directly from the strong law of large numbers. In the setup where $\Xbb$ is an iid process, the question of Bayes consistency is addressed by \cite{hanneke2020universal,cohen2022metric}. In particular, \cite{cohen2022metric} uses the notion of medoid that adapts Fréchet means to compression-based algorithms. 

\comment{
Note that Theorem \ref{thm:bayes_optimistic} is limited to $(\Ycal, d)$ being a (separable locally compact) metric space, thus the case of real regression $\Ycal = \Rbb$ with squared loss $\ell(y_1,y_2) = (y_1-y_2)^2$ does not fall in this result. However, it is easy to note that for this setting, the \emph{empirical average memorizer} will also be optimistically universal as a consequence of the strong law of large numbers
\begin{equation*}
    f_{t}(\mb x_{< t}, \mb  y_{<t },x_{t}) = \begin{cases}
    \frac{1}{\sum_{i=1}^{t-1} \1_{x_i=x_t}}\sum_{i=1}^{t-1} \1_{x_i=x_t}y_i, &\text{if }x_t\in \mb x_{<t},\\
    y_0 \text{ if } x_t\not\in \mb x_{<t}.
    \end{cases}
\end{equation*}
}

\section{Conclusion and Open Directions}\label{sec: conclusion}
In this paper, we showed that memorization is an optimistically universal learning rule when the loss is unbounded. This closes the study of unrestricted universal consistency with unbounded losses for the \textit{online},  the \textit{inductive} and the \textit{self-adaptive} setups. In a sense, our result may be viewed as a \emph{negative} result, revealing that for unbounded losses, the processes in $\suol$ are all, to some extent, rather trivial.  On the other hand, we know of many positive results for universal consistency with unbounded losses for i.i.d.\ or stationary ergodic processes, under \emph{additional conditions} on the $Y_t$ sequence, such as with \emph{moment} conditions on $Y_t$ in the regression setting \cite{gyorfi:02,gyorfi:07}.  Thus, it would seem the next chapter in the study of universal consistency with unbounded losses and general non-i.i.d.\ families of processes should be to formulate broad sufficient conditions on the $Y_t$ sequence (relative to the given $X_t$ sequence) so that the family of processes $\Xbb$ admitting universal learning becomes rich, and in particular, includes within it all i.i.d.\ or stationary ergodic processes $\Xbb$.  It would be particularly interesting if there is a moment condition on the $Y_t$ sequence (or more-generally, on the empirical moments of $\ell(y_0,Y_t)$ for some $y_0$), under which the set of all processes $\Xbb$ admitting strong universal learning are precisely the same as for the case of \emph{bounded} losses: i.e., in the case of online learning, the set $\SUOL$ that would result from learning with a bounded loss (see \cite{hanneke2021learning,hanneke2021open} for discussions regarding this set), or in the case of inductive or self-adaptive learning, the sets $\SUIL$ or $\SUAL$, respectively, that would result from learning with a bounded loss (which have been characterized by \cite{hanneke2021learning}).

{\vskip 2mm}For concreteness, focusing on the setting of online learning, and letting $\suol_{{\scriptscriptstyle{01}}}$ denote the set of processes $\Xbb$ 
that admit strong universal online learning under the $0$-$1$ loss for 
binary classification (i.e., $\Ycal = \{0,1\}$ and $\ell(y,y') = \one[ y \neq y' ]$), we ask the following question:

{\vskip 4mm}\noindent \textbf{Open Problem:} For every unbounded loss $\ell$, 
does there exist an online learning rule $\{f_t\}_{t=1}^{\infty}$ with the 
property that, for every $\Xbb \in \suol_{{\scriptscriptstyle{01}}}$, we have 
$\mathcal{L}_{\Xbb}(f_{\cdot},f^*) = 0~~\text{(a.s.)}$ 
for every measurable $f^* : \Xcal \to \Ycal$ such that, for every $y_0 \in \Ycal$, 
$\limsup_{T \to \infty} \frac{1}{T} \sum_{t=1}^{T} \ell(y_0, f^*(X_t)) < \infty~~\text{(a.s.)}$?  In other words, $f^*$ has empirically bounded losses in the long-run average.

{\vskip 4mm}\noindent If this is found to be true, it would generalize the known conditions on consistent regression (for deterministic functions) with the squared loss under i.i.d.\ processes with finite $\mathrm{Var}(Y)$ \cite{gyorfi:02}. 
Of course, even variations of the above problem using milder restrictions on $(\Xbb,f^*)$ would be interesting.  As such, we may essentially pose a relaxed version of the question, which replaces the last condition with the mere requirement that, 
for every $y_0 \in \Ycal$, 
$\limsup_{T \to \infty} \frac{1}{T} \sum_{t=1}^{T} \ell(y_0, f^*(X_t))^p < \infty~~\text{(a.s.)}$, 
where $p > 0$ is some constant: i.e., any limiting empirical \emph{moment} condition.\\

In the case of \emph{bounded} losses, both the question of the existence of optimistically universal online learning rules, and of concisely characterizing the set $\suol$, remain open, as recently highlighted in a COLT open problem article \cite{hanneke2021open}. In particular, \cite{hanneke2021open} conjectures that for bounded losses, a process $\Xbb$ is in $\suol$ if and only if it has the property that, for any countable measurable partition $\{\A_k\}_{k=1}^{\infty}$ of $\Xcal$, the number of visited sets grows sub-linearly with time: $\#\{ k\in\Nbb : A_k \cap \Xbb_{\leq T}\} = o(T)$~~(a.s.).
The success of memorization in the unbounded setup suggests that a simple rule such as nearest neighbour might possibly be optimistically universal for online learning with bounded losses. However, such intuitions can also be misleading. In the self-adaptative setup with bounded losses, \cite{hanneke2021learning} proved that nearest neighbour is not optimistically universal although another more-intricate learning rule is optimistically universal.\\

\acks{M. Blanchard was partly funded by ONR grant N00014-18-1-2122.}

\bibliography{refs}

\appendix

\section{Proof of Proposition \ref{prop:no_test_fs}}
\label{appendix:no_consistent_test_fs}

Since $\Xcal$ infinite, let $\{x_i\}_{i\geq 0}$ a sequence of distinct points of $\Xcal$. Let $\{\hat t_n\}_{n\geq 0}$ be a hypothesis test for condition $\fs$. We suppose by contradiction that $\{\hat t_n\}$ is consistent and aim to construct a sequence $\Xbb$ on which it fails. Following a proof construction introduced in \cite{hanneke2021learning} (Theorem 47), we construct a (deterministic) process which fools the test by alternatively switching between two modes: constant $X_t = x_0$ or visiting points of the distinct sequence $X_t = x_t$. Let $n_0=0$ and $X_0=x_0$. We construct the sequences $\Xbb$ and $(n_i)_{i\geq 0}$ by induction. Suppose we have constructed $n_t$ for $0\leq t\leq k-1$ and $X_t$ for $0\leq t\leq n_{k-1}$.
\begin{itemize}
    \item If $k$ is even, consider the deterministic process $\Ybb$ such that $Y_t=X_t$ for $t\leq n_{k-1}$ and $Y_t=x_0$ for $t>n_{k-1}$. Because $\{\hat t_n\}$ is consistent, we can define an index $n_k>n_{k-1}$ such that $\Pbb(\hat t_{n_k}(\Ybb_{\leq n_k})= 1)>\frac{3}{4}$.
    \item If $k$ odd, consider the deterministic process $\Ybb$ such that $Y_t=X_t$ for $t\leq n_{k-1}$ and $Y_t=x_t$ for $t>n_{k-1}$. Similarly, let $n_k>n_{k-1}$ such that $\Pbb(\hat t_{n_k}(\Ybb_{\leq n_k})= 0)>\frac{3}{4}$.
\end{itemize}
We then set $X_t=Y_t$ for $n_{k-1}<n_k$. Note that for all $k\geq 0$, $\Pbb(\hat t_{n_{2k}}(\Xbb_{\leq n_{2k}})= 1)>\frac{3}{4}$ and $\Pbb(\hat t_{n_{2k+1}}(\Xbb_{\leq n_{2k+1}})= 1)<\frac{1}{4}$. Then, $\hat t_n(\Xbb_{\leq n})$ does not converge in probability and the hypothesis test $\{\hat t_n\}$ is not consistent. This ends the proof of the proposition.

\section{Proof of Theorem \ref{thm:fmv_necessary}}
\label{appendix:proof_fmv_necessary}

Let $\Xbb$ be a stochastic process that does not satisfy $\fmv$ and $f_n$ be a learning rule, we aim to show that this learning rule cannot be universally consistent. By hypothesis, there exists a finite measurable partition $\{A_k\}_{k=1}^{\infty}$ in $\Bcal$ such that $\Xbb$ visits an infinite number of the $A_k$ with probability $p>0$. We denote by $\Acal$ this event. We call $\Fcal$ the class of measurable functions $f:\Xcal\rightarrow\Ycal$ that takes constant values on each of the $A_k$. We will show that some objective function $f^*\in \Fcal$ cannot be learnt by $f_n$.\\

\noindent
First, let us define $\tau_k$ the first instant at which $\Xbb$ attains $A_k$.
$$\tau_k = \begin{cases}
\min\{t\in \Nbb:X_t \in A_k\} &\text{ if }  A_k\cap \Xbb\neq \emptyset
\\0 &\text{ otherwise.}
\end{cases}$$
We also define a deterministic quantity $T_k \in \Nbb$ that upper bounds the $\tau_k$ with high probability, i.e.
$$\Pbb(\tau_k \leq T_k)>1-2^{-k}, \quad \forall k\geq 1.$$
By the Borel Cantelli lemma, since $\sum_k \Pbb(\tau_k>T_k)<\infty$, almost surely there exists $\kappa\in \Nbb$ such that $\tau_k\leq T_k$ for $k\geq \kappa$. We will denote $\Ecal$ this event.
We now sample $f^*$ randomly from $\Fcal$ as follows:
$$f^*(x\in A_k) = \begin{cases} 
y_{k,0} &\text{ with proba } 1/2,\\
y_{k,1} &\text{ with proba } 1/2,\end{cases}$$
where $y_{k,1}$ and $y_{k,0}$ are selected such that $ \ell(y_{k,1},y_{k,0})\geq 2 c_{\ell} T_k$ (recalling that $c_{\ell}$ denotes the constant from the relaxed triangle inequality satisfied by $\ell$).
Note that taking the expectation over the randomness in $f^*$ allows to write:
\begin{equation}\label{eq:1}
    \sup_{g\in \Fcal}\Ebb_{\Xbb}(\Lcal_{\Xbb}(f_{.},g))\geq \Ebb_{f^*,\Xbb}(\Lcal_{\Xbb}(f_{.},f^*)).
\end{equation}
We first prove a lower bound on the right term. Conditionally on $\Acal\cap\Ecal$, observe that for any $k\geq \kappa$,  $(\Xbb_{<\tau_k},\Ybb_{<\tau_k})$ provides no information on $f^*(X_{\tau_k})$. Then, the average of the corresponding prediction error satisfies $\Ebb(\ell(f_{\tau_k}(\Xbb_{<\tau_k},\Ybb_{<\tau_k},X_{\tau_k}),f^*(X_{\tau_k}))\geq T_k\geq \tau_k$, where we used the fact that $\ell$ satisfies the relaxed triangle inequality. Thus, in $\Acal\cap\Ecal$, $\Ebb_{f^*}(\Lcal_{\Xbb}(f_{.},f^*,\tau_k))\geq 1$ for any $k\geq \kappa$, hence by Fatou's lemma
\begin{equation*}
    \Ebb_{f^*}(\Lcal_{\Xbb}(f_{.},f^*))\geq \limsup_{t\in \Nbb} \Ebb_{f^*}(\Lcal_{\Xbb}(f_{.},f^*,\tau_k))\geq 1.
\end{equation*}
Therefore, $\Pbb_{\Xbb}(\Ebb_{f^*}(\Lcal_{\Xbb}(f_{.},f^*)) \geq 1)\geq \Pbb(\Acal\cap\Ecal)=p$, which yields $\Ebb_{f^*,\Xbb}(\Lcal_{\Xbb}(f_{.},f^*))\geq p$.
Equation \ref{eq:1} then shows that there exists $g\in \Fcal$ such that $\Ebb_{\Xbb}(\Lcal_{\Xbb}(f_{.},g))>0$, hence $\Lcal_{\Xbb}(f_{.},g)>0$ with nonzero probability. This ends the proof of the result.

\section{Proof of Theorem \ref{thm:evans}}
\label{appendix:proof_evans}

Because $\Ycal$ is separable, the sequence of empirical measures converges weakly to the true measure of $Y$ almost surely \cite{varadarajan1958convergence}. Hence, by Lemma 2.1 of \cite{evans2020strong}, almost surely we have for all $y\in \Ycal$, $\frac{1}{n}\sum_{i=1}^n d^p(y,Y_i)\to \Ebb d^p(y,Y)$. In the rest of this proof, we suppose that this event is met. We now denote by $R^*:= \min_{y\in \Ycal}\Ebb d^p(y,Y)$ the minimal risk and $y_0\in \Ycal$ such that $\Ebb d^p(y_0,Y)<\infty$. We first note that the sequence $(\hat y_n)_n$ is bounded almost surely since $d^p(\hat y_n,y_0)\leq \frac{2^p}{n}\sum_{i=1}^n d^p(y_0,Y_i)\to 2^p\Ebb d^p(y_0,Y)$. Therefore, almost surely the sequence lies in a compact. We will suppose that this condition is also met in the rest of the proof. Now suppose by contradiction that the convergence does not hold. Let $\epsilon>0$ and consider a subsequence $\phi$ such that $|\frac{1}{\phi(k)}\sum_{i=1}^{\phi(k)} d^p(\hat y_{\phi(k)},Y_i)-R^*|\geq \epsilon$. Because the sequence $\hat y_n$ is confined to a compact closed ball, there exists a subsequence $\psi$ and $y^*\in \Ycal$ such that $\hat y_{\phi(\psi(l))}\to y^*$. For simplicity, we omit the subsequences and simply write $\hat y_l\to y^*$. Fix $\epsilon>0$. Using the constant $c_\epsilon$ dependent on $p$ from Lemma 2.3 of \cite{evans2020strong} such that for any $a,b\geq 0$, $(a+b)^p\leq (1+\epsilon)a^p + c_\epsilon b^p$, we can write for any $y\in \Ycal$,
\begin{align*}
    \Ebb d^p(y^*,Y) =\lim \frac{1}{l}\sum_{i=1}^{l} d^p(y^*,Y_i) &\leq \liminf c_\epsilon d^p(y^*,\hat y_l) + \frac{1+\epsilon}{l}\sum_{i=1}^{l} d^p(\hat y_l,Y_i)\\
    &\leq \liminf c_\epsilon d^p(y^*,\hat y_l) + \frac{1+\epsilon}{l}\sum_{i=1}^{l} d^p(y,Y_i)\\
    &=(1+\epsilon)\Ebb d^p(y,Y).
\end{align*}
Since this holds for any $\epsilon>0$ and $y\in \Ycal$, this shows that $\Ebb d^p(y^*,Y)=R^*$. We now observe that
\begin{align*}
    \limsup\frac{1}{l}\sum_{i=1}^{l} d^p(\hat y_l,Y_i)& \leq \limsup c_\epsilon d^p(\hat y_l,y^*) + \frac{1+\epsilon}{l}\sum_{i=1}^{l} d^p(y^*,Y_i) = (1+\epsilon)\Ebb d^p(y^*,Y),\\
    \liminf\frac{1}{l}\sum_{i=1}^{l} d^p(\hat y_l,Y_i)&\geq \limsup  \frac{1}{(1+\epsilon)l}\sum_{i=1}^{l} d^p(y^*,Y_i) - \frac{c_\epsilon}{1+\epsilon}d(\hat y_l,y^*)  = \frac{\Ebb d^p(y^*,Y)}{1+\epsilon}.
\end{align*}
Therefore, $\frac{1}{l}\sum_{i=1}^{l} d^p(\hat y_l,Y_i)\to R^*$ which contradicts the hypothesis and ends the proof of the theorem.

\section{Proof of Theorem \ref{thm:bayes_optimistic}}\label{appendix:proof_bayes}
We fix a process $\Xbb$ that admits a strong universal Bayes consistent learning rule and define $S=\{x\in \Xcal,\; \{x\}\cap \Xbb\neq \emptyset\}$ the random support of $\Xbb$. This set is almost surely finite because $\Xbb\in\suol$. We fix a measurable function $\bar f$ and we write,
\begin{align*}
    \mathcal{L}_{\Xbb}(f_{\cdot},p_{Y|X};T,\fb) &= \limsup_{T\rightarrow\infty}\frac{1}{T}\sum_{t=1}^T\sum_{x\in S, X_t = x}\left(\ell(f_t(\Xbb_{<t},\Ybb_{<t},X_t),Y_t)-\ell(\bar{f}(X_t),Y_t)\right),\\
    &\leq \sum_{x\in S}\limsup_{T\rightarrow\infty}\frac{1}{T}\sum_{t=1}^T \1_{X_t=x}\left(\ell(f_t(\Xbb_{<t},\Ybb_{<t},X_t),Y_t)-\ell(\bar{f}(X_t),Y_t)\right).
\end{align*}
Now fix $x\in \Xcal$, And denote $\Lcal_x(T):=\frac{1}{T}\sum_{t=1}^T \1_{X_t=x}\left(\ell(f_t(\Xbb_{<t},\Ybb_{<t},X_t),Y_t)-\ell(\bar{f}(X_t),Y_t)\right).$ If $\sum_{t\geq 1}\1_{X_t=x}<\infty$, then we have directly $\limsup_T \Lcal_x(T)=0$. We now turn to the case $\sum_{t\geq 1}\1_{X_t=x}=\infty$. Note that the sequence $(Y_u)_{u,\;X_u=x}$ is an i.i.d. sequence of variables following the distribution $p_{Y|X=x}$. First suppose that there exists $y_0\in\Ycal$ such that $\Ebb_{Y|X=x}\ell(y_0,Y)<\infty$. Then, by Theorem \ref{thm:evans},
\begin{equation*}
     \frac{1}{\sum_{t= 1}^T\1_{X_t=x}}\sum_{t=1}^T\1_{X_t=x}\ell(f_t(\Xbb_{<t},\Ybb_{<t},X_t),Y_t)\to \min_{y\in \Ycal}\Ebb_{Y|X=x}\ell(y,Y).
\end{equation*}
Otherwise, we have observe directly
\begin{equation*}
    \frac{1}{\sum_{t= 1}^T\1_{X_t=x}}\sum_{t=1}^T\1_{X_t=x}\ell(f_t(\Xbb_{<t},\Ybb_{<t},X_t),Y_t)\leq \infty=\min_{y\in \Ycal}\Ebb_{Y|X=x}\ell(y,Y).
\end{equation*}
Further note that
\begin{equation*}
    \frac{1}{\sum_{t= 1}^T\1_{X_t=x}}\sum_{t=1}^T \1_{X_t=x}\ell(\bar{f}(X_t),Y_t)\to \Ebb_{Y|X=x}\ell(\bar f(x),Y)\geq \min_{y\in \Ycal}\Ebb_{Y|X=x}\ell(y,Y).
\end{equation*}
Therefore, we obtain in all cases
\begin{equation*}
    \limsup_{T\rightarrow\infty}\frac{1}{{\sum_{t= 1}^T\1_{X_t=x}}}\sum_{t=1}^T \1_{X_t=x}\left(\ell(f_t(\Xbb_{<t},\Ybb_{<t},X_t),Y_t)-\ell(\bar{f}(X_t),Y_t)\right) \leq 0.
\end{equation*}
Because we have $\sum_{t= 1}^T\1_{X_t=x}\leq T$, this yields $\limsup_{T}\Lcal_x(T)\leq 0$.
Now recall that because $\Xbb\in\suol$, the set $S$ is finite almost surely, hence $\mathcal{L}_{\Xbb}(f_{\cdot},p_{Y|X};T,\fb)\leq 0\; (a.s.).$

\end{document}